\newtheorem{prop}{Proposition}
\newtheorem{corollary}{Corollary}
\newtheorem*{remark}{Remark}
\theoremstyle{definition}
\newcommand{\ra}[1]{\renewcommand{\arraystretch}{#1}}
\crefname{section}{section}{sections}
\crefname{subsection}{subsection}{subsections}
\Crefname{section}{Section}{Sections}
\Crefname{subsection}{Subsection}{Subsections}
\Crefname{figure}{Figure}{Figures}
  \def\cite#1{<#1>}%
  \def\Cref#1{<#1>}%
\title{AIDA: Analytic Isolation and Distance-based Anomaly Detection Algorithm}
\author{
  Luis A. Souto Arias\\
  Mathematical Institute, Utrecht University\\
  The Netherlands\\
  \texttt{l.a.soutoarias@uu.nl} \\
  %% examples of more authors
  \And
  Cornelis W. Oosterlee \\
 Mathematical Institute,  Utrecht University\\
  The Netherlands\\
  \texttt{c.w.oosterlee@uu.nl} \\  
  \And
  Pasquale Cirillo \\
  ZHAW School of Law and Management\\
  Zurich University of Applied Sciences \\
  Switzerland \\
  \texttt{ciri@zhaw.ch} \\
}
\date{September 2022}
\begin{document}

\maketitle

\begin{abstract}
 We combine the metrics of distance and isolation to develop the \textit{Analytic Isolation and Distance-based Anomaly (AIDA) detection algorithm}. AIDA is the first distance-based method that does not rely on the concept of nearest-neighbours, making it a parameter-free model. 
 Differently from the prevailing literature, in which the isolation metric is always computed via simulations, we show that AIDA admits an analytical expression for the outlier score, providing new insights into the isolation metric. Additionally, we present an anomaly explanation method based on AIDA, the \textit{Tempered Isolation-based eXplanation (TIX)} algorithm, which finds the most relevant outlier features even in data sets with hundreds of dimensions. We test both algorithms on synthetic and empirical data: we show that AIDA is competitive when compared to other state-of-the-art methods, and it is superior in finding outliers hidden in multidimensional feature subspaces. Finally, we illustrate how the TIX algorithm can find outliers in multidimensional feature subspaces, and use these explanations to analyze common benchmarks used in anomaly detection.
\end{abstract}

\keywords{Outlier detection \and Anomaly explanation \and Isolation \and Distance \and Ensemble methods}

\section{Introduction}

We introduce a new distance-based anomaly detection algorithm---the \textit{Analytic Isolation and Distance-based Anomaly (AIDA) detection method}---which, unlike methods such as Local Outlier Factor (LOF) \cite{breunig2000} and k-Nearest Neighbours (kNN) \cite{ramaswamy2000}, does not rely on the concept of neighbours to detect anomalies/outliers, but rather on the concept of isolation. While the concept of nearest neighbours is well-known, the first article to propose an outlier measure based on isolation was \cite{liu2012}, where the Isolation Forest (iForest) algorithm was introduced. In that article, the authors coupled the isolation metric with a randomized axis-parallel subspace search, the two main ingredients of the iForest method.

In contrast, in this work we use directly the isolation metric in a distance-based setting to provide an alternative to the nearest neighbours in distance-based anomaly detection methods. The reasons are two-fold: 
\begin{enumerate}
    \item The isolation metric is parameter-free, thus avoiding the problem of making an inaccurate parameter choice in practice. This is a very common problem in unsupervised methods, since the lack of labelled targets makes it a very challenging task to determine optimal parameter values. Although subsampling techniques can mitigate this issue (see \cite{aggarwal2015}), the choice is still data-dependent.
    \item  As shown in \Cref{sec:method}, differently from iForest or other distance-based methods like LOF, AIDA is able to detect several types of outliers. This property is particularly relevant in ensemble methods \cite{zimek2014}, where the scores among different anomaly detection models are combined to increase the robustness of the final estimates. 
\end{enumerate}

In particular, if the models contained in an ensemble identify the same type of outliers, the bias of the ensemble remains the same as that of its constituents. Therefore, it is important to combine anomaly detection algorithms with different outlier preferences. For example, if the outliers are hidden in multidimensional subspaces \cite{keller2012}, iForest suffers from low performance due to the small possibility of randomly choosing the right subspace that contains the outliers \cite{bandaragoda2018}. Moreover, since the splits are axis-parallel, iForest creates artificial outlier regions, inducing bias in the outliers that are detected \cite{xu2022}. Some articles combine the splitting mechanism of iForest with a distance-based method in order to enhance the splitting process (see \cite{bandaragoda2014,karczmarek2020,mensi2021,tokovarov2022}), while others construct multidimensional splits in order to detect outliers in multidimensional subspaces (see \cite{hariri2021,xu2022}). Although the first branch of methods improves the splitting process per dimension, as long as the splits are axis-parallel, the iForest algorithm still creates artificial outlier regions, and shows low performances in detecting hidden outlier subspaces. 

Another characteristic of the AIDA algorithm is that, so far in the literature, the isolation score has been computed purely using a simulative approach. Conversely, here we prove that the outlier score function used by AIDA admits an analytical closed-form expression, which simplifies computations and provides new insights into the isolation metric. These analytical formulas can be used---for instance---to find deeper connections between isolation and neighbour-based methods, or to analyze the theoretical properties of the iForest algorithm in simple scenarios.

Since the distance measure loses contrast in very high dimensions, we use an ensemble of random subspaces in order to alleviate this problem (we refer to \cite{aggarwal2017,keller2012,lazaveric2005} for several subspace sampling alternatives). Several outlier scores and splitting distributions are also tested with the purpose of finding a good measure of ``outlierness" in high dimensions, reducing the curse of dimensionality. We also employ random subsampling to bring the computational complexity from quadratic to linear in the number of samples \cite{pang2015}. Due to the properties of the ensemble, the algorithm can benefit from parallelization to further reduce the computational burden.

Another fundamental aspect in anomaly detection is the ability to explain why a certain point was labelled as an outlier \cite{dang2014}. In many applications, practitioners are faced with large data sets containing hundreds or even thousands of features. An anomaly detection algorithm that only informs whether a point is an outlier or an inlier is much less helpful than an algorithm that also returns the most important features defining the outliers. This information can be used to focus on the outliers that seem more interesting in a particular application, greatly reducing the time analysts need to spend studying outliers. For this reason, we propose an explanation method that combines the AIDA method and the Simulated Annealing (SA) algorithm (e.g., \cite{aarts1985,kirkpatrick1983}), the \textit{Tempered Isolation-based eXplanation} (TIX) algorithm. The TIX algorithm satisfies the four desirable properties for anomaly explanation introduced in \cite{gupta2019}, namely: 1) it has quantifiable explanations, 2) it is not computationally expensive, 3) it is visually interpretable and 4) scalable. Moreover, it also takes into account the interactions among features, and it is able to find outliers hidden in multidimensional subspaces \cite{keller2012}.

%It important to stress that, in this article, we use the term outlier in the most general sense, that is, a data point that differs significantly from the other observations, an anomaly in other words. We do not make the more technical distinction between a proper outlier, i.e., a possible error in the data that lies outside of the natural range of variation of the phenomenon under scrutiny, and an extreme, i.e., a rare observation of unusual magnitude or characteristics \cite{embrechts,falk2013laws}. This choice is due to the fact that, when looking for anomalies in a data set, their true nature can only be assessed ex post, after the identification. And this is where the TIX algorithm can also be of help: to further distinguish between a proper outlier to be treated (deleted, imputed, etc.) and an extreme (to be analyzed with the right tools, e.g., extreme value statistics \cite{embrechts}).

The paper is organized as follows. In \Cref{sec:method}, we introduce the AIDA algorithm as well as the analytical formulas for isolation. We also show with a simple example the type of outliers that AIDA detects when compared to iForest and LOF. The TIX method is described in \Cref{sec:explain}. Numerical results concerning the performances of AIDA and TIX are given in \Cref{sec:results}. \Cref{sec:conclusions} concludes the paper.

\section{Methodology}\label{sec:method}

We first introduce the AIDA algorithm for numerical features in \Cref{sec:gen_set}, with the analytical formulas for isolation in \Cref{sec:analy_iso}. Then, we present a possible extension of AIDA to categorical features in \Cref{sec:aida_cat}. Finally \Cref{sec:aida_example} illustrates the type of outliers that AIDA detects with a simple example.

\subsection{General setting}\label{sec:gen_set}

Let $\bm{X}_n$ be a data set of size $n$ and dimensionality $d$, such that $X_i \in \mathbb{R}^d$, for $i=1,...,n$, and let $l_p(\cdot,\cdot)$ be a weighted distance function defined as
\begin{equation}\label{eq:distance}
    l_p(X_i,X_j) = \left(\sum_{l=1}^d \omega_l \, |X_{i,l}-X_{j,l}|^p\right)^{1/p},
\end{equation}
where $p \in \mathbb{R}^+$ and $\omega_l \in \mathbb{R}^+$ for $l = 1,...,d$ are the weights given to each feature\footnote{We focus on $\mathcal{L}_p$ norms only, but other notions of distance, e.g., cosine distances, can also be applied.}.

Moreover, let $N,\psi_{min},\psi_{max} \in \mathbb{N}^+$ be the number of subsamples, the mininum subsampling size and the maximum subsampling size, respectively. Then, the AIDA algorithm works as follows: first, we create $N$ random subsamples $\bm{Y}_{\psi_j}$, for $j = 1,...,N$, from $\bm{X}_n$ without replacement with sizes $\psi_j$ ranging randomly between $\psi_{min}$ and $\psi_{max}$. This is the training stage, which is simply storing the subsamples of the training set for future use. The average and worst case memory requirements are $\mathcal{O}(Nd\psi_m)$ and $\mathcal{O}(Nd\psi_{max})$, respectively, where $\psi_m = (\psi_{min}+\psi_{max})/2$.

Next, for each point $X_i$ in the test set (for simplicity, we assume the test set is $\bm{X}_n$) we compute its distance to every observation in a given subsample $\bm{Y}_{\psi_j}$---for $j = 1,...,N$---using \Cref{eq:distance}, and sort them in increasing order. We also include the zero point into the distances, which corresponds to the distance of $X_i$ to itself. Thus, the minimum distance is always zero, which we denote as the \textit{left-fringe point}, since it is the left-most point in the sorted distances. We call this projection the \textit{distance profile} (DP) of a point $X_i$ with respect to a subsample $\bm{Y}_{\psi_j}$, denoted DP$(X_i,\bm{Y}_{\psi_j})$, for $j = 1,...,N$.

Once the distances have been sorted, we apply the iForest algorithm to this new data set until the  left-fringe point has been isolated.  These steps give us an outlier score per subsample, and the final score of $X_i$ is obtained by aggregating these results, usually with the average or the maximum functions \cite{aggarwal2015}.

The idea of the AIDA algorithm is illustrated in \Cref{fig:example_2D}. The top left plot shows the complete data set, which consists of 1000 observations with two features, while the top right plot shows a random subsample of size 50, together with two test points marked with a red triangle (A) and a red circle (B). The lower plots present the DPs of point A (left) and point B (right), where the left-fringe point is marked with a red cross to emphasize that this is the point we want to isolate. Clearly, the left-fringe point is easier to isolate in the DP of point A than in the DP of point B, hence point A will receive a higher outlier score. This is expected by looking at \Cref{fig:example_2D_data}, since point A is in an area of much lower density. An outlier is, therefore, a point that is easy to isolate in the 1D projection given by its DP.

\begin{figure}[ht]
    \centering
    \begin{subfigure}{0.4\textwidth}
    \centering
    \includegraphics[width=\textwidth]{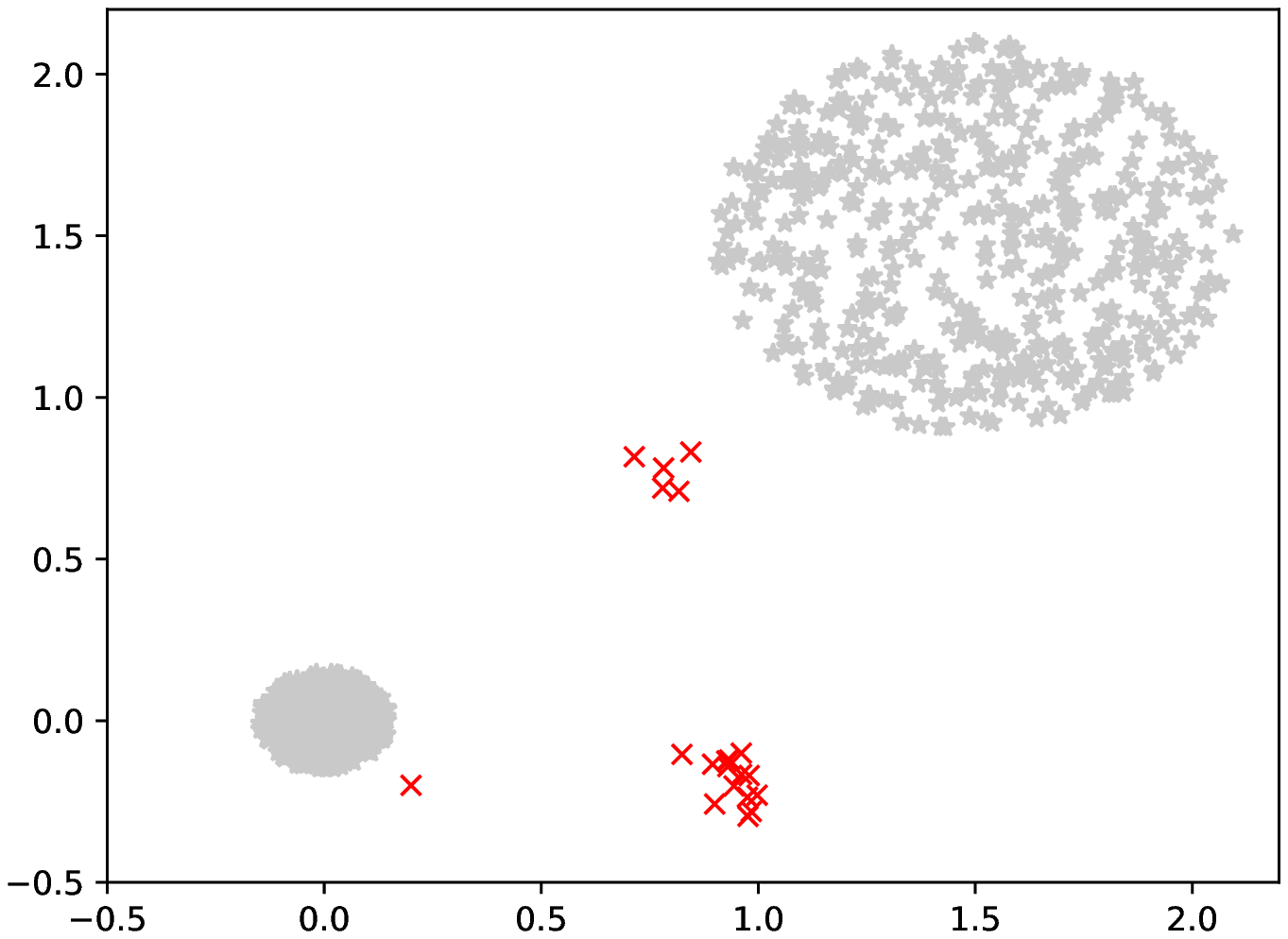}
    \caption{Whole data set.}
    \label{fig:example_2D_data}
    \end{subfigure}
    \begin{subfigure}{0.4\textwidth}
    \centering
    \includegraphics[width=\textwidth]{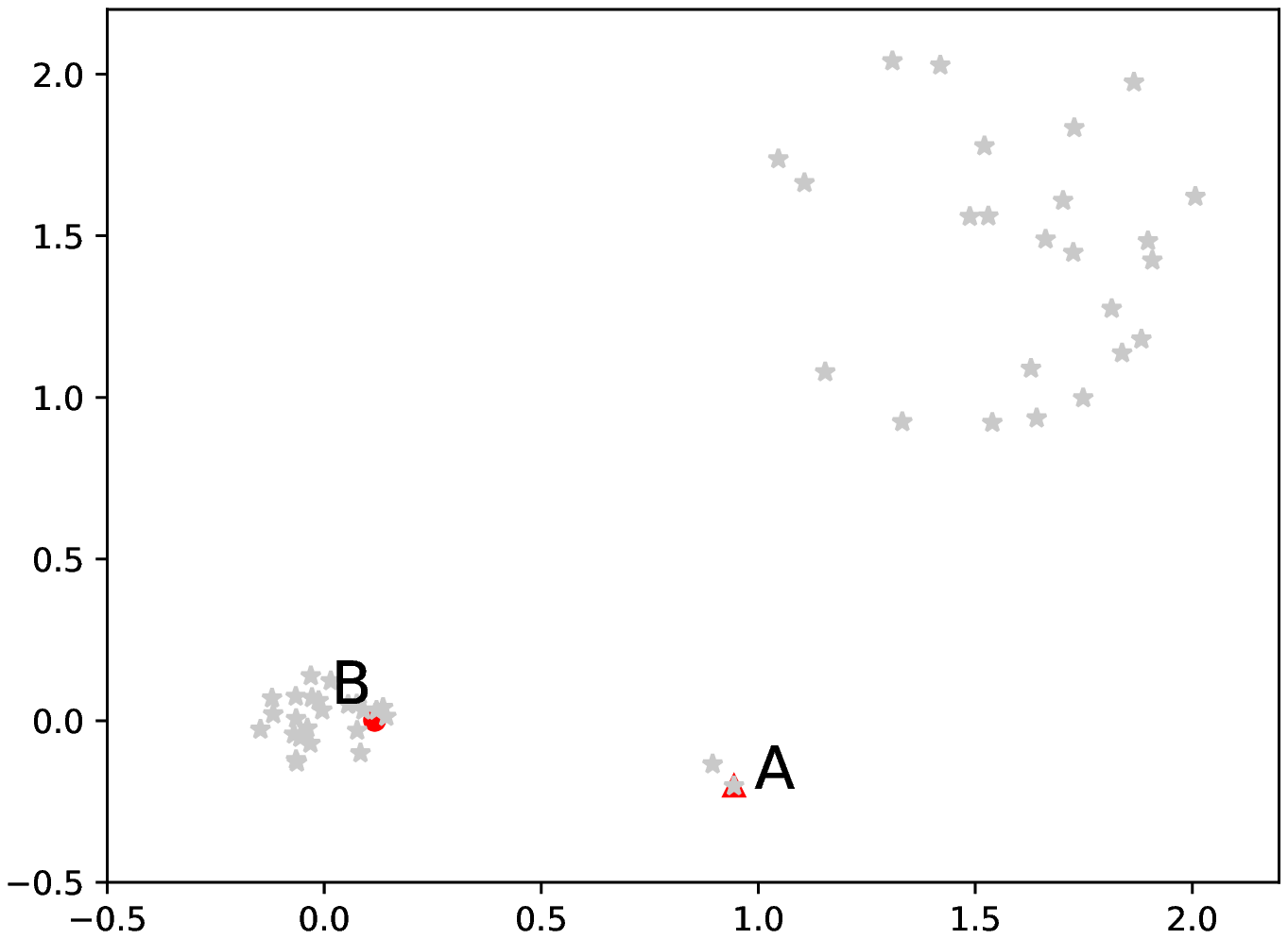}
    \caption{Random subsample.}
    \end{subfigure}
    \medskip
    \begin{subfigure}{0.4\textwidth}
    \centering
    \includegraphics[width=\textwidth]{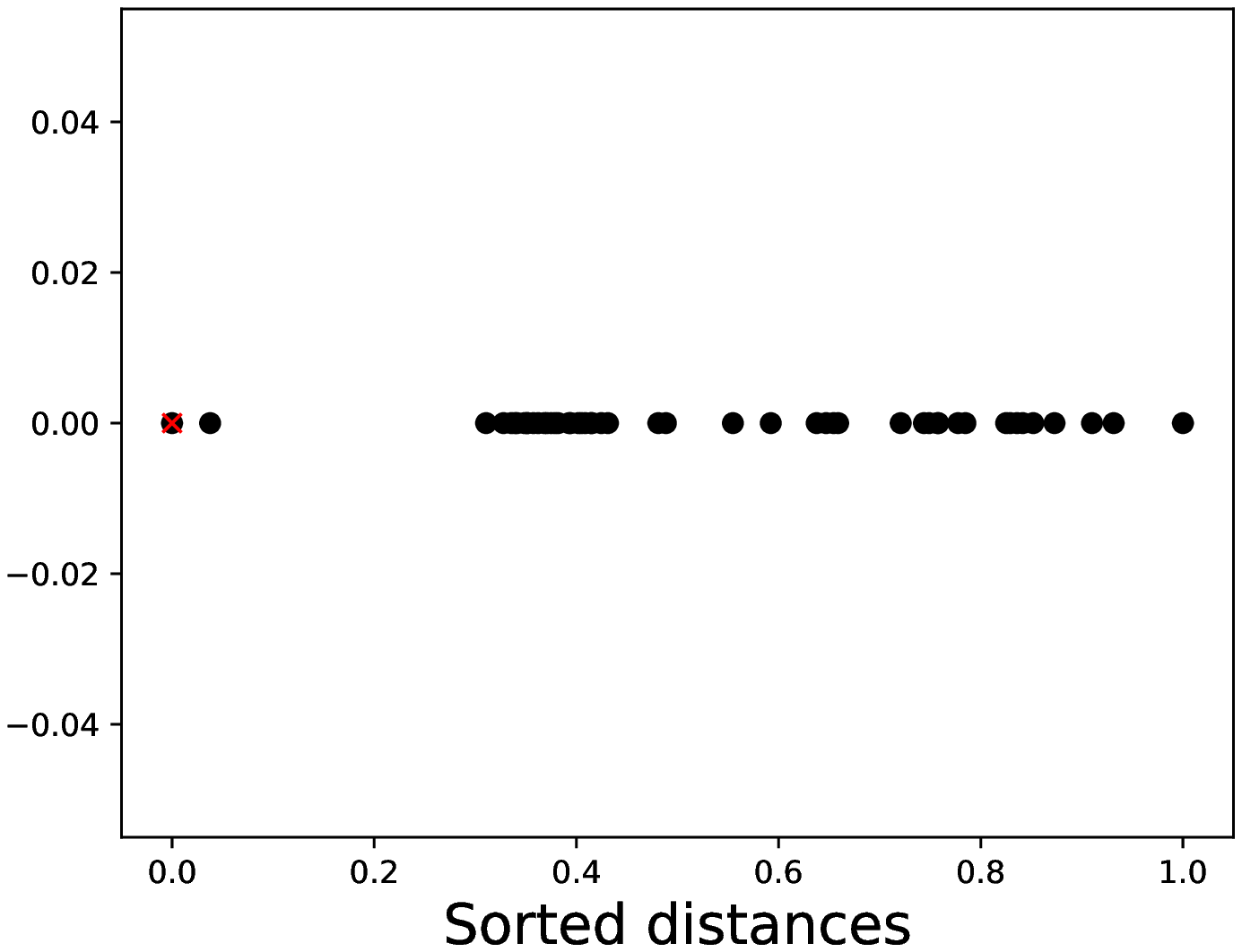}
    \caption{Distance profile of point A.}
    \label{fig:example_2D_dist1}
    \end{subfigure}
    \begin{subfigure}{0.4\textwidth}
    \centering
    \includegraphics[width=\textwidth]{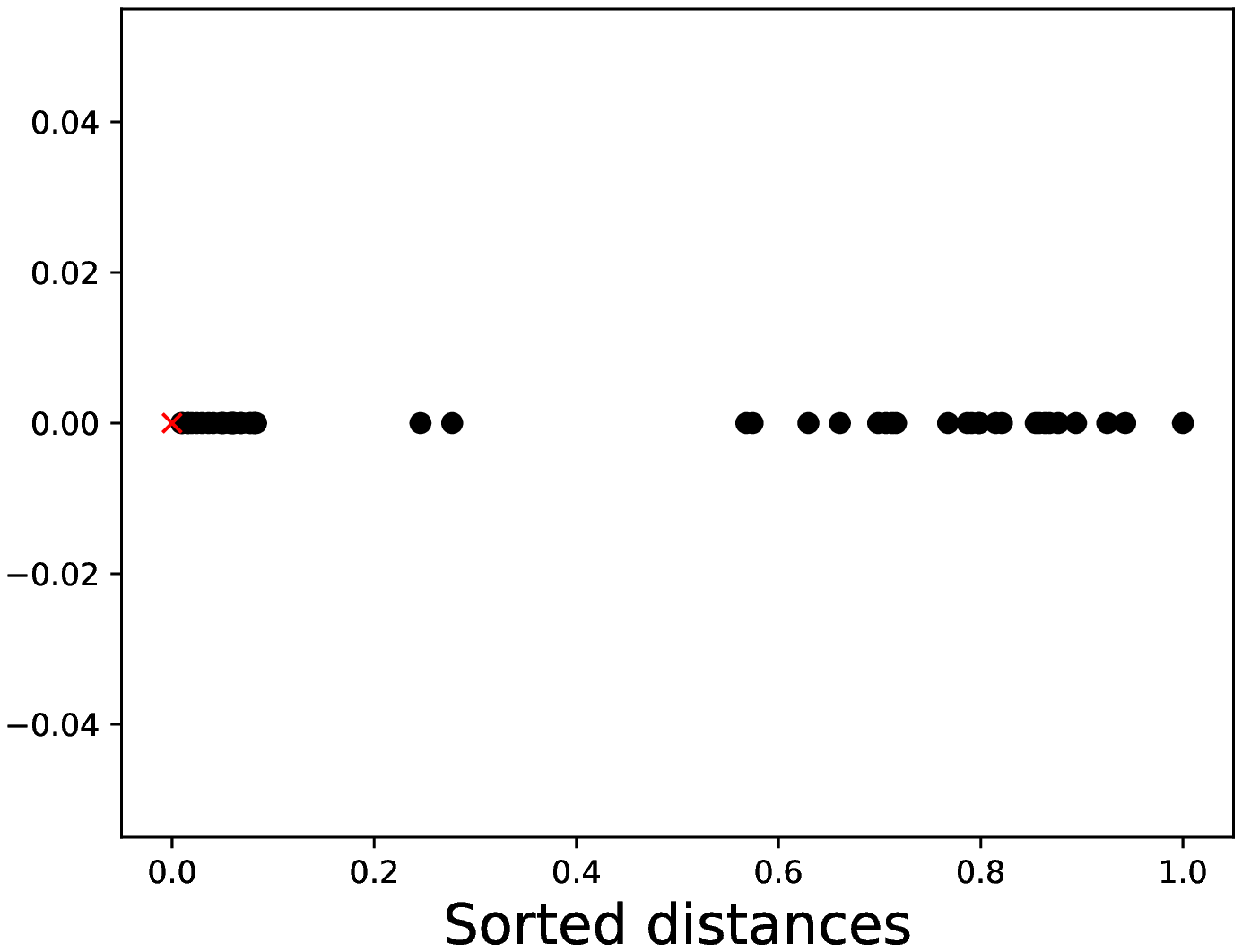}
    \caption{Distance profile of point B.}
    \label{fig:example_2D_dist2}    
    \end{subfigure}    
    
    \caption{Comparison between the DPs of an outlier (A) and an inlier (B). The inlier (B) is marked as a red circle on the top right figure, and the outlier (A) is marked with a red triangle.}
    \label{fig:example_2D}
\end{figure}

\subsection{Analytical Isolation}\label{sec:analy_iso}

If we follow the standard iForest methodology, once the DP has been computed, we would randomly split the data using simulations until the left-fringe point has been isolated. However, since the DP is a 1D projection of the full feature space, we can benefit from analytical formulas to compute the isolation score, hence reducing the computational cost and the variance of the results. This is proved in the following proposition.

\begin{prop}\label{prop:mgf}
Let $\bm{Z}_n$ be a sorted vector of real numbers such that $Z_i \in \mathbb{R}, i=1,...,n$, and $Z_i \leq Z_j, i \leq j$. Denote by $h$ the number of splits that it takes to isolate $Z_1$, and by $g(Z_i,Z_{i+1}|\bm{Z}_n)$ the probability of a random split occurring on the interval $[Z_i,Z_{i+1})$ given $\bm{Z}_n$. Assume that\footnote{In the original iForest algorithm, the split probabilities are directly proportional to the length of the interval, so that $g(Z_i,Z_{i+1}) = Z_{i+1}-Z_i$. Here we consider a more general formulation \cite{tokovarov2022}.}
\begin{equation}
    g(Z_i,Z_{i+1}|\bm{Z}_n) = \frac{g(Z_i,Z_{i+1})}{G(Z_1,Z_n)},
\end{equation}
with $G(Z_1,Z_i) = \sum_{j=1}^{i-1} g(Z_j,Z_{j+1})$, then the moment generating function (mgf) of $h$ is given by
\begin{equation}\label{eq:Eeu_general}
    \mathbb{E}[e^{u h}|\bm{Z}_n] = \prod_{i=1}^{n-1} \frac{e^u g(Z_i,Z_{i+1}) + G(Z_1,Z_i)}{G(Z_1,Z_{i+1})}.
\end{equation}
\end{prop}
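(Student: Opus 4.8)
The plan is to read the isolation of the left‑fringe point $Z_1$ as an absorbing Markov process and to extract the mgf of $h$ by a one‑step (first‑split) decomposition, which produces a linear recursion that I will then collapse into a product by a telescoping trick. The first step is to fix the state space. Because $Z_1$ is the minimum, every split that does not already isolate it must keep the block containing $Z_1$ and discard everything to its right: if the surviving block is $\{Z_1,\dots,Z_m\}$ and a split lands in $[Z_i,Z_{i+1})$, the block shrinks to $\{Z_1,\dots,Z_i\}$. Hence the whole process is encoded by the index $m$ of the right endpoint of the surviving block, starting at $m=n$ and absorbed at $m=1$. I would also record that, conditional on the active block $\{Z_1,\dots,Z_m\}$, the assumed splitting law renormalises over the active range, so a single split falls in $[Z_i,Z_{i+1})$ with probability $g(Z_i,Z_{i+1})/G(Z_1,Z_m)$ for $i=1,\dots,m-1$.

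Next I would introduce $M_m:=\mathbb{E}\!\left[e^{uh}\mid \text{block }\{Z_1,\dots,Z_m\}\right]$, with the convention $M_1=1$ (no further split is needed once $Z_1$ is isolated). Conditioning on the location of the first split, which contributes a factor $e^u$ and sends the block to $\{Z_1,\dots,Z_i\}$, gives the linear recursion
\begin{equation}\label{eq:firststep}
    M_m = \frac{e^u}{G(Z_1,Z_m)}\sum_{i=1}^{m-1} g(Z_i,Z_{i+1})\,M_i,\qquad M_1=1.
\end{equation}

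The obstacle is that \Cref{eq:firststep} couples $M_m$ to the entire history $M_1,\dots,M_{m-1}$, so it cannot be telescoped directly. The key manoeuvre is to clear the denominator, set $S_m:=G(Z_1,Z_m)M_m=e^u\sum_{i=1}^{m-1} g(Z_i,Z_{i+1})M_i$, and subtract the identity for $m-1$ from the one for $m$; the cumulative sum cancels and leaves $S_m-S_{m-1}=e^u g(Z_{m-1},Z_m)M_{m-1}$. Substituting back $S_m=G(Z_1,Z_m)M_m$ yields the first‑order recurrence
\begin{equation}\label{eq:twoterm}
    M_m=\frac{e^u\,g(Z_{m-1},Z_m)+G(Z_1,Z_{m-1})}{G(Z_1,Z_m)}\,M_{m-1}.
\end{equation}

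Finally I would iterate \Cref{eq:twoterm} from $M_1=1$, so that the ratios telescope into
\begin{equation*}
    M_m=\prod_{i=1}^{m-1}\frac{e^u\,g(Z_i,Z_{i+1})+G(Z_1,Z_i)}{G(Z_1,Z_{i+1})},
\end{equation*}
using $G(Z_1,Z_1)=0$ so that the $i=1$ factor reduces to $e^u$ (consistent with the fact that the first split on a two‑point block always isolates $Z_1$). Setting $m=n$ gives exactly \Cref{eq:Eeu_general}, and the argument can be phrased as an induction whose inductive step is precisely \Cref{eq:twoterm}. The only points needing care are the empty‑sum conventions $G(Z_1,Z_1)=0$ and $M_1=1$, and the justification that the conditional split probabilities renormalise over sub‑blocks; everything else is the subtraction that converts the history‑dependent sum into a telescoping product.
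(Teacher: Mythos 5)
Your proof is correct, and it reaches the result by the same overall route as the paper---first-split conditioning, a two-term recursion, and a telescoping product---so the comparison comes down to how the key recursion is justified. The paper works with a general functional $f(h)$: it writes the probability-level recursion \Cref{eq:prob_recursive}, swaps summation order to obtain the full-history identity \Cref{eq:Efh_mid}, and collapses that to the two-term recursion \Cref{eq:Efh_recursion} via a probabilistic renormalization argument (the factor $G(Z_1,Z_{n-1})/G(Z_1,Z_n)$ converts split probabilities conditioned on $\bm{Z}_n$ into ones conditioned on $\bm{Z}_{n-1}$), only then specializing to $f(h)=e^{uh}$. You instead write the first-step recursion directly for the mgf of each sub-block, $G(Z_1,Z_m)M_m=e^u\sum_{i=1}^{m-1}g(Z_i,Z_{i+1})M_i$, and collapse the history by pure algebra: differencing consecutive cleared-denominator identities leaves $G(Z_1,Z_m)M_m=\bigl(e^u g(Z_{m-1},Z_m)+G(Z_1,Z_{m-1})\bigr)M_{m-1}$, which telescopes from $M_1=1$ and $G(Z_1,Z_1)=0$ to exactly \Cref{eq:Eeu_general}. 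What your version buys is a shorter, more self-contained derivation that replaces the paper's subtlest step (the renormalization claim) with an Abel-type subtraction; what the paper's version buys is that its intermediate recursion is stated for an arbitrary functional $f$ before specialization (your differencing would deliver this too, but you did not need it). Both arguments rest on the same implicit self-similarity assumption---that the split law renormalizes on sub-blocks, i.e.\ $g(Z_i,Z_{i+1}|\bm{Z}_m)=g(Z_i,Z_{i+1})/G(Z_1,Z_m)$ for every $m$---which the paper uses silently in \Cref{eq:prob_recursive} and which you rightly flag as the one point needing justification.
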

\begin{proof}
Since $h$ is bounded between $1$ and $n-1$, it is clear that, for any function $f:\mathbb{R}\rightarrow \mathbb{R}$,
\begin{equation}\label{eq:Efh_init}
    \mathbb{E}[f(h)|\bm{Z}_n] = \sum_{i=1}^{n-1} f(i)\mathbb{P}[h=i|\bm{Z}_n].
\end{equation}
Furthermore, due to the recursive nature of the random splits,
\begin{equation}\label{eq:prob_recursive}
    \mathbb{P}[h=i|\bm{Z}_n] = \sum_{j=i}^{n-1} g(Z_j,Z_{j+1}|\bm{Z}_n)\mathbb{P}[h=i-1|\bm{Z}_j],
\end{equation}
where $\bm{Z}_j$ contains the first $j$ elements of $\bm{Z}_n$.

Plugging \Cref{eq:prob_recursive} into \Cref{eq:Efh_init} and swapping the order of the summations yields
\begin{equation}
    \mathbb{E}[f(h)|\bm{Z}_n] = f(1)g(Z_1,Z_2|\bm{Z}_n) + \sum_{j=2}^{n-1} g(Z_j,Z_{j+1}|\bm{Z}_n) \sum_{i=2}^{j} f(i) \mathbb{P}[h=i-1|\bm{Z}_j],
\end{equation}
which can be further simplified as
\begin{equation}\label{eq:Efh_mid}
    \mathbb{E}[f(h)|\bm{Z}_n] = f(1)g(Z_1,Z_2|\bm{Z}_n) + \sum_{j=2}^{n-1} g(Z_j,Z_{j+1}|\bm{Z}_n) \mathbb{E}[f(h+1)|\bm{Z}_j].
\end{equation}
\Cref{eq:Efh_mid} can then be written as a recursion, i.e., 
\begin{equation}\label{eq:Efh_recursion}
    \mathbb{E}[f(h)|\bm{Z}_n] = \frac{g(Z_{n-1},Z_n)}{G(Z_1,Z_n)}\mathbb{E}[f(h+1)|\bm{Z}_{n-1}] + \frac{G(Z_1,Z_{n-1})}{G(Z_1,Z_n)}\mathbb{E}[f(h)|\bm{Z}_{n-1}],
\end{equation}
where the first term on the right-hand side of \Cref{eq:Efh_recursion} corresponds to the last term of the summation appearing in \Cref{eq:Efh_mid}, and the normalization factor $G(Z_1,Z_{n-1})/G(Z_1,Z_n)$ is necessary to move from probabilities conditioned on $\bm{Z}_n$ to probabilities conditioned on $\bm{Z}_{n-1}$.

Finally, choosing $f(h) = e^{u h}$ gives
\begin{equation}
    \mathbb{E}[e^{u h}|\bm{Z}_n] = \frac{e^u g(Z_{n-1},Z_n) + G(Z_1,Z_{n-1})}{G(Z_1,Z_n)}\mathbb{E}[e^{u h}|\bm{Z}_{n-1}],
\end{equation}
and, by following the recursion, one obtains the desired result of \Cref{eq:Eeu_general}.
\end{proof}

Given the mgf in \Cref{eq:Eeu_general}, it is easy to compute other quantities---such as the expectation and the variance of $h$---through its derivatives with respect to $h$.

\begin{corollary}
Let $\bm{Z}_n$, $h$ and $g(Z_i,Z_{i+1}|\bm{Z}_n)$ be defined as in \Cref{prop:mgf}. Then
\begin{equation}\label{eq:mean_general}
    \mathbb{E}[h|\bm{Z}_n] = 1 + \sum_{i=2}^{n-1}\frac{g(Z_i,Z_{i+1})}{G(Z_1,Z_{i+1})},
\end{equation}

\begin{equation}\label{eq:var_general}
    \mathbb{V}[h|\bm{Z}_n] = \sum_{i=2}^{n-1}\frac{g(Z_i,Z_{i+1})}{G(Z_1,Z_{i+1})}\left(1-\frac{g(Z_i,Z_{i+1})}{G(Z_1,Z_{i+1})}\right).
\end{equation}
\end{corollary}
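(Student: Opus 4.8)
The plan is to extract the two moments from the mgf in \Cref{eq:Eeu_general} via its cumulant generating function, which exploits the product structure to turn differentiation into a sum. Writing $g_i := g(Z_i,Z_{i+1})$ and $G_i := G(Z_1,Z_i)$, I would first record the telescoping identity $G_{i+1} = G_i + g_i$, which follows directly from the definition $G(Z_1,Z_i) = \sum_{j=1}^{i-1} g(Z_j,Z_{j+1})$. Using this, each factor of the mgf rewrites as
\begin{equation}
    M_i(u) := \frac{e^u g_i + G_i}{G_{i+1}} = p_i\, e^u + (1 - p_i), \qquad p_i := \frac{g_i}{G_{i+1}},
\end{equation}
since $G_i / G_{i+1} = 1 - g_i/G_{i+1} = 1 - p_i$. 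This already reveals that $M_i$ is the mgf of a Bernoulli$(p_i)$ variable, so $h$ is, in distribution, a sum of $n-1$ independent Bernoulli trials; the mean and variance then follow from the standard facts $\mathbb{E} = \sum_i p_i$ and $\mathbb{V} = \sum_i p_i(1-p_i)$. I would nonetheless carry out the differentiation explicitly, both to keep the argument self-contained and because the corollary advertises the derivative route.

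Concretely, I would set $K(u) := \log \mathbb{E}[e^{uh}|\bm{Z}_n] = \sum_{i=1}^{n-1} \log M_i(u)$ and compute the per-factor derivatives
\begin{equation}
    K_i'(u) = \frac{e^u g_i}{e^u g_i + G_i}, \qquad K_i''(u) = \frac{e^u g_i\, G_i}{(e^u g_i + G_i)^2}.
\end{equation}
Evaluating at $u = 0$ and using $G_{i+1} = G_i + g_i$ gives $K_i'(0) = g_i/G_{i+1} = p_i$ and $K_i''(0) = g_i G_i/G_{i+1}^2 = p_i(1-p_i)$. Since the first cumulant is the mean and the second cumulant is the variance, summing over $i$ yields $\mathbb{E}[h|\bm{Z}_n] = \sum_{i=1}^{n-1} p_i$ and $\mathbb{V}[h|\bm{Z}_n] = \sum_{i=1}^{n-1} p_i(1-p_i)$.

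The only point that needs care is the boundary term $i=1$: here $G_2 = g_1$, so $p_1 = g_1/G_2 = 1$. This isolates the leading $1$ appearing in \Cref{eq:mean_general}, and since $p_1(1-p_1) = 0$ it makes the $i=1$ contribution to the variance vanish, so the sum in \Cref{eq:var_general} may legitimately start at $i=2$. I do not anticipate a genuine obstacle: the computation is routine differentiation once the product is logged, and the only things to check carefully are the algebraic simplification $G_i/G_{i+1} = 1 - p_i$ and the correct handling of this first factor. Equivalently, one could differentiate the mgf itself and use $\mathbb{V}[h] = M''(0) - M'(0)^2$, but the cumulant route is cleaner because it avoids the cross terms produced by twice-differentiating a product.
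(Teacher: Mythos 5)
Your proof is correct and follows exactly the route the paper indicates: differentiating $\log \mathbb{E}[e^{uh}|\bm{Z}_n]$ and evaluating the cumulants at $u=0$, with the boundary factor $p_1 = 1$ accounting for the leading $1$ in the mean and the vanishing $i=1$ term in the variance. The Bernoulli factorization you note (each factor being the mgf of a Bernoulli$(p_i)$, so $h$ is a sum of independent Bernoulli trials) is a nice additional insight not spelled out in the paper, but the underlying computation is the same one the paper calls ``straightforward.''
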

\begin{proof}
The result is immediate after a straightforward computation of the derivatives of $\log \mathbb{E}[e^{u h}|\bm{Z}_n]$.
\end{proof}

While \Cref{prop:mgf} provides a general framework for isolation in one dimension, from now on we set $g(Z_i,Z_{i+1}) = (Z_{i+1}-Z_i)^{\alpha}$ as in \cite{tokovarov2022}. Applying this choice to \Cref{eq:mean_general,eq:var_general} yields the two scoring functions that we will consider in this paper.

\begin{equation}\label{eq:mean_expo}
    \mathbb{E}[h|\bm{Z}_n] = 1 + \sum_{i=2}^{n-1}\frac{(Z_{i+1}-Z_i)^{\alpha}}{\sum_{j=1}^i(Z_{j+1}-Z_j)^{\alpha}},
\end{equation}

\begin{equation}\label{eq:var_expo}
    \mathbb{V}[h|\bm{Z}_n] = \sum_{i=2}^{n-1}\frac{(Z_{i+1}-Z_i)^{\alpha}}{\sum_{j=1}^i(Z_{j+1}-Z_j)^{\alpha}}\left(1-\frac{(Z_{i+1}-Z_i)^{\alpha}}{\sum_{j=1}^i(Z_{j+1}-Z_j)^{\alpha}}\right).
\end{equation}

With this formulation, the split probabilities used in iForest are a particular case of \Cref{eq:mean_expo,eq:var_expo} with $\alpha = 1$. The scoring functions can thus be further simplified as

\begin{equation}\label{eq:mean_uni}
    \mathbb{E}[h|\bm{Z}_n] = 1 + \sum_{i=2}^{n-1}\frac{Z_{i+1}-Z_i}{Z_{i+1}-Z_1},
\end{equation}

\begin{equation}\label{eq:var_uni}
    \mathbb{V}[h|\bm{Z}_n] = \sum_{i=2}^{n-1}\frac{Z_{i+1}-Z_i}{Z_{i+1}-Z_1}\left(1-\frac{Z_{i+1}-Z_i}{Z_{i+1}-Z_1}\right).
\end{equation}

The intuition behind using $\mathbb{E}[h]$ as a score function was already explained in the original iForest paper \cite{liu2012}: an outlier should be isolated in just a few splits and, therefore, $\mathbb{E}[h]$ should be smaller for outliers than for inliers. The reason for using the variance as an alternative, as we propose here, is similar in nature. An inlier should be surrounded by other inliers, and, therefore, it takes---on average---many splits to isolate it. However, albeit less probable, an inlier can also be isolated in just a few splits, giving rise to a large range of variation for $h$. Hence, it is assumed that an inlier has a higher $\mathbb{V}[h]$ than an outlier. In the extreme case where a point is always isolated in a single split, we have $\mathbb{V}[h] = 0$.

In the context of the AIDA algorithm, $\bm{Z}_n$ corresponds to the DP of a given point, so that $Z_1 = 0$, and the other $Z_i$'s are the sorted distances. For example, assuming no subsampling ($\bm{Y}_{\psi_j} = \bm{X}_n$), the outlier score of a point $X$ using \Cref{eq:var_expo} as the score function would be
\begin{equation}
    score(X) = -\mathbb{V}[h|DP(X,\bm{X}_n)],
\end{equation}
where the minus sign is added so that the score of outliers is higher than the score of inliers.

Hence, the AIDA algorithm can be decomposed into two main steps. First, the DP of each point $X_i$ in $\bm{X}_n$ is computed with respect to each subsample $\bm{Y}_{\psi_j}$, for $j = 1,...,N$. Then, an outlier score is obtained by applying either \Cref{eq:mean_expo} or \Cref{eq:var_expo} to the DPs and aggregating the results among all subsamples. The average computational complexity of this procedure is $\mathcal{O}(n N \psi_m  (d+\log(\psi_m)+1))$. Therefore, it is linear in both the number of features and the number of observations. Additionally, if $d>\log(\psi_m)$, the pair-wise distances are the most expensive part of the algorithm. Otherwise, it is the sorting function that takes most of the computational time. The score function does not consume much time in comparison.

\begin{remark}
\Cref{eq:mean_expo,eq:var_expo} may diverge if there are consecutive zero values, therefore these cases must be treated separately. In practice, the challenge are the points equal to $Z_1$, since, for instance, in the case $Z_1 \neq Z_2 = Z_3$ the denominator is still larger than zero. Moreover, repeated values equal to $Z_1$ are, by definition, impossible to isolate, thus we recommend the maximum penalization to the outlier score for each repeated value. In particular, we suggest to add $+1$ and $+0.25$ to \Cref{eq:mean_expo,eq:var_expo} for each repeated value, respectively. The reason for choosing these values is that they represent the maximum possible increments per observation in \Cref{eq:mean_expo,eq:var_expo}. 
\end{remark}

\subsection{Categorical data and subspace search}\label{sec:aida_cat}

The AIDA methodology can be coupled with subspace search methods such as feature bagging, rotated bagging, and others \cite{lazaveric2005,aggarwal2015}. Doing so, partially diminishes the problem of loss of contrast that distance-based methods face in high dimensions. The drawback is that we would then introduce further randomness into the algorithm, and many subspaces must be explored to obtain meaningful results. Nevertheless, subspace search methods have shown to produce satisfactory results when compared to a full space search \cite{lazaveric2005,keller2012}. Additionally, if we couple each random subspace with a random subsample, the computational cost can be further reduced, since the time complexity of the distance calculation is linear in the number of features.

Another important generalization is the inclusion of categorical data, in particular nominal data. This is because there is no clear nor unambiguous relationship between the distinct values of a nominal feature\footnote{In categorical variables, especially when non-ordinal, the way in which categories are defined has a substantial impact on their interpretability, and thus on measures meant to quantify similarity, dispersion, etc. \cite{Agresti2013}.}, and thus the concept of distance cannot be directly applied. Hence, we consider instead the concept of \textit{similarity} between two different categories of a nominal feature, and then transform this similarity into a distance metric. 

In \cite{boriah2008}, the following relationship between distance and similarity was considered:
\begin{equation}\label{eq:sim_dist_old}
    S(X,Y) = \frac{1}{1+dist(X,Y)},
\end{equation}
so that points with similarity one have zero distance.

Here, we consider an analogous relation, namely
\begin{equation}\label{eq:sim_dist}
    S(X,Y) = e^{-dist(X,Y)}.
\end{equation}

\Cref{eq:sim_dist_old,eq:sim_dist} coincide in the extremes of similarity one and zero, but the rate of convergence towards zero in terms of the distance is much faster in \Cref{eq:sim_dist} than in \Cref{eq:sim_dist_old}. 

The next step is to choose a particular similarity function. Here we have chosen to work with a slight modification of the \textit{Goodall3} similarity function \cite{boriah2008}, but several alternatives are possible \cite{pang2015}. 

Let $\bm{X}_n^{nom}$ be a data set of size $n$ consisting of $d_{nom}$ nominal features, such that $X_i^{nom} \in \mathbb{N}^{d_{nom}}$, for $i = 1,...,n$. Moreover, let $f_k(x)$ be the number of times that class $x$ appears in the $k$-th feature of $\bm{X}_n^{nom}$. Then, the similarity between two points with classes $x$ and $y$ in a given nominal feature $k$ is defined as
\begin{equation}\label{eq:sim_feature}
    S_k(x,y) = \left\{ \begin{array}{ll}
                1, & x = y,\\
                \\
                1-p_k^2(y), & x \neq y,
                \end{array} \right.
\end{equation}
where
\begin{equation}\label{eq:prob_nom}
  p_k^2(x) = \frac{f_k(x)(f_k(x)-1)}{(n+1)n}.
\end{equation}

Finally, combining \Cref{eq:sim_dist,eq:sim_feature} we obtain the distance between two samples consisting of $d_{nom}$ nominal features:
\begin{equation}\label{eq:dist_sim}
    dist^{nom}(X_i^{nom},X_j^{nom}) = -\sum_{l=1}^{d_{nom}} \omega_l^{nom} \log(S_l(X_{i,l}^{nom},X_{j,l}^{nom})).
\end{equation}

Notice that the denominator in \Cref{eq:prob_nom} contains the term $(n+1)$, instead of the original $(n-1)$ in \cite{boriah2008}. This is to avoid a similarity of exactly zero, which would cause the distance to diverge and yield unstable results in the anomaly detection algorithm. Thus, the distance defined in \Cref{eq:dist_sim} is capped to a maximum of $\log((n+1)/2)$ per feature.

\begin{remark}
\Cref{eq:dist_sim} implies a different similarity aggregation than the one commonly used in the literature \cite{boriah2008}. In particular, the total similarity is usually defined as the weighted average of the similarities per feature, while here we have defined it as the weighted product instead. The purpose of this change is to magnify the effect of features where the classes do not match. Consider an example where $d_{nom}$ is very large, and $X_i^{nom}$ and $X_j^{nom}$ match in every feature except one, where $X_i^{nom}$ has a unique class and could, therefore, be labelled as an outlier. Using the average similarity would yield a total similarity close to one---or a distance close to zero---possibly resulting in the wrong classification of $X_i^{nom}$ as an inlier. The weighted product helps solving this problem, and detects nominal features where outliers are different from the majority of the data.
\end{remark}

Having defined a distance measure for the nominal features, the general scenario with mixed-attribute data can be tackled in the following manner: consider a data set $\bm{X}_n = \{\bm{X}_n^{num},\bm{X}_n^{nom}\}$ consisting of $n$ samples with $d = d_{num}+d_{nom}$ features, where the first $d_{num}$ are numerical\footnote{This includes categorical ordinal data.} and the last $d_{nom}$ are nominal. Then the total distance between two observations in $\bm{X}_n$ is given by
\begin{equation}\label{eq:total_distance}
    dist(X_i,X_j) = l_p(X_i^{num},X_j^{num})+dist^{nom}(X_i^{nom},X_j^{nom}),
\end{equation}
where $l_p(\cdot,\cdot)$ and $dist^{nom}(\cdot,\cdot)$ are defined by \Cref{eq:distance,eq:dist_sim}, respectively. The rest of the algorithm is the same as in the case with only numerical features. That is, we compute the DPs of each observation based on \Cref{eq:total_distance} and compute the outlier score using either \Cref{eq:mean_expo} or \Cref{eq:var_expo}. 

Pseudocodes illustrating the training and test phases of the AIDA method are presented in \Cref{alg:aida_training} and \Cref{alg:aida_testing}, respectively. Notice that we have introduced a normalization step in line 12 of \Cref{alg:aida_testing}. This is particularly relevant when each subsample has assigned a different feature subspace, since in that case the unnormalized outlier scores may not be comparable.

\begin{algorithm}
\caption{AIDA: training phase.}\label{alg:aida_training}
\begin{algorithmic}[1]
\State Load $\bm{X}_n^{num}$ and $\bm{X}_n^{nom}$.
\State Set $N$, $\psi_{min}$, $\psi_{max}$.
\For{$j = 1,...,N$}
\State Set $\psi_j \sim U(\psi_{min},\psi_{max})$.
\State Set $\bm{Y}_{\psi_j}$ by drawing $\psi_j$ samples without replacement from $\bm{X}_n$.
\State Compute and store the frequencies of each class of the nominal features using \Cref{eq:prob_nom}.
\EndFor
\end{algorithmic}
\end{algorithm}

\begin{algorithm}
\caption{AIDA: testing phase.}\label{alg:aida_testing}
\begin{algorithmic}[1]
\State Load $\bm{X}_n^{num}$ and $\bm{X}_n^{nom}$.
\State Choose a score function from \Cref{eq:mean_expo,eq:var_expo}.
\State Set $\alpha$, $\bm{\omega}^{num}$ and $\bm{\omega}^{nom}$.
\For{$i = 1,...,n$}
    \For{$j = 1,...,N$}
       \State Compute the distance of $X_i$ to each point in $\bm{Y}_{\psi_j}$ and to itself using \Cref{eq:distance,eq:dist_sim,eq:total_distance}.
       \State Sort the distances from minor to major.
       \State Compute the outlier score with the chosen outlier function.
    \EndFor       
\EndFor
\For{$i = 1,...,N$}
    \State Transform the outlier scores to Z-scores.
\EndFor
\State Aggregate the scores obtained with each subsample.
\end{algorithmic}
\end{algorithm}

\begin{remark}
The weights $\omega_l$ in \Cref{eq:distance,eq:dist_sim} can be used to emphasize or diminish the contribution of specific features. Specifically, a large $\omega_l$ gives more importance to the $l$-th feature in the distance metric, hence points that are anomalous in that feature will be found more easily. This property is relevant when there is some prior knowledge about the features that cause the outliers. These weights can also be interpreted as a generalization of several subspace search methods \cite{aggarwal2017}. For example, we can implement the feature bagging \cite{lazaveric2005} algorithm by setting $\omega_l = 1$ on the randomly chosen features, and $\omega_l = 0$ on the rest. In case of no prior knowledge about the outliers, we recommend setting $\omega_l = 1$, for $l=1,...,d$, if no subspace search methods are used.
\end{remark}

\subsection{Illustrative example}\label{sec:aida_example}

We return to the data set of \Cref{fig:example_2D} and compare the type of outliers AIDA detects with iForest and LOF. This example also allows us to compare the two scoring functions for different values of $\alpha$ in \Cref{eq:mean_expo,eq:var_expo}. For the distance-based methods, we use the Manhattan distance with equal weights, so $p = 1$ and $\omega_l = 1$ in \Cref{eq:distance}, for $l=1,...,d$. We use the AIDA algorithm without subsampling, so that there is no source of randomness and the main difference with LOF is simply the use of the isolation score instead of the local density. We test the expectation and variance score functions of \Cref{eq:mean_expo,eq:var_expo}, respectively, with two values of $\alpha$, mainly $\alpha = 1$ and $\alpha = 2$, for a total of four AIDA configurations. We denote, for example, the AIDA algorithm with variance score function and $\alpha = 1$ as AIDA (V1), and the other variations analogously. We have chosen the number of neighbours in LOF to be $k = 20$, which seems reasonable given the size of the data set. Regarding iForest, we set the number of trees to $1000$ and we do not use subsampling.

The results can be seen in \Cref{fig:example_2D_outliers}, where we plot the 60 most anomalous points detected by each method. It is of interest to observe that each method identifies different parts of the data set as outliers, apart from the most obvious ones. Concretely, both iForest and LOF assign outliers to the rims of the inlier clusters. Given that the local densities of the inlier clusters are very similar, LOF outliers can be seen in both clusters, while iForest only detects anomalies in the larger cluster.

In contrast, AIDA is able to find sparse areas inside the inlier clusters. For $\alpha = 1$, the expectation and variance scores behave similarly, detecting outliers only in the large cluster. However, the expectation score fails to detect a couple of the actual outliers, while the variance score detects all of them. Setting $\alpha = 2$ gives more importance to the small inlier cluster, and so both score functions detect outliers in the spare areas of that cluster as well. On the other hand, the detection of the actual outliers has worsened, especially in the expectation score. Increasing $\alpha$ even further appears to be detrimental in this example, as it magnifies the noise in the data.

Nonetheless, these results hint that the variance function may be a better score function than the expectation, and that different values of $\alpha$ may be used to detect various types of outliers. This will be further explored in \Cref{sec:results}.

\begin{figure}[ht]
    \centering
    \begin{subfigure}{0.4\textwidth}
    \centering
    \includegraphics[width=\textwidth]{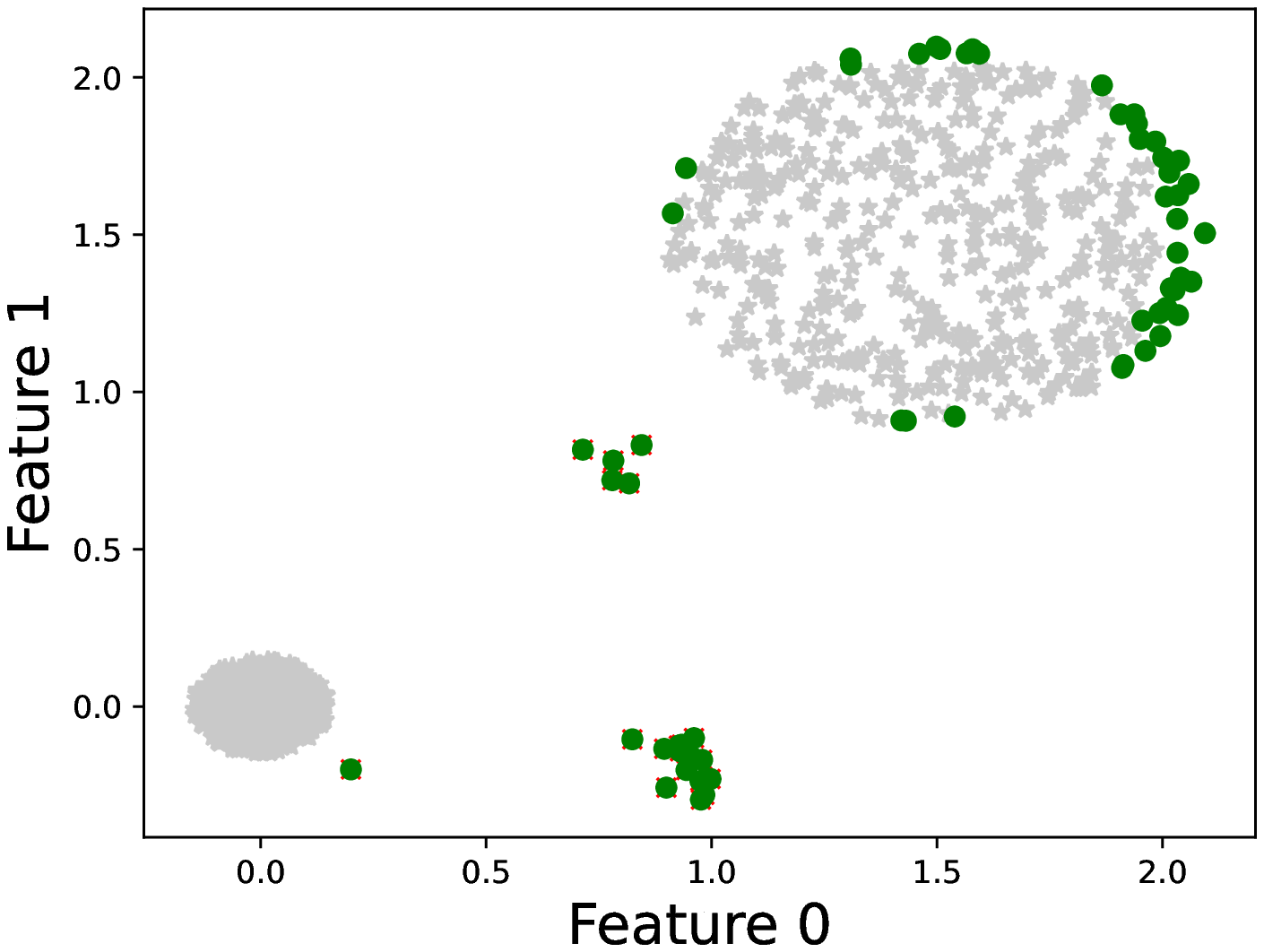}
    \caption{Isolation Forest.}
    \label{fig:example_2D_IF}
    \end{subfigure}
    \begin{subfigure}{0.4\textwidth}
    \centering
    \includegraphics[width=\textwidth]{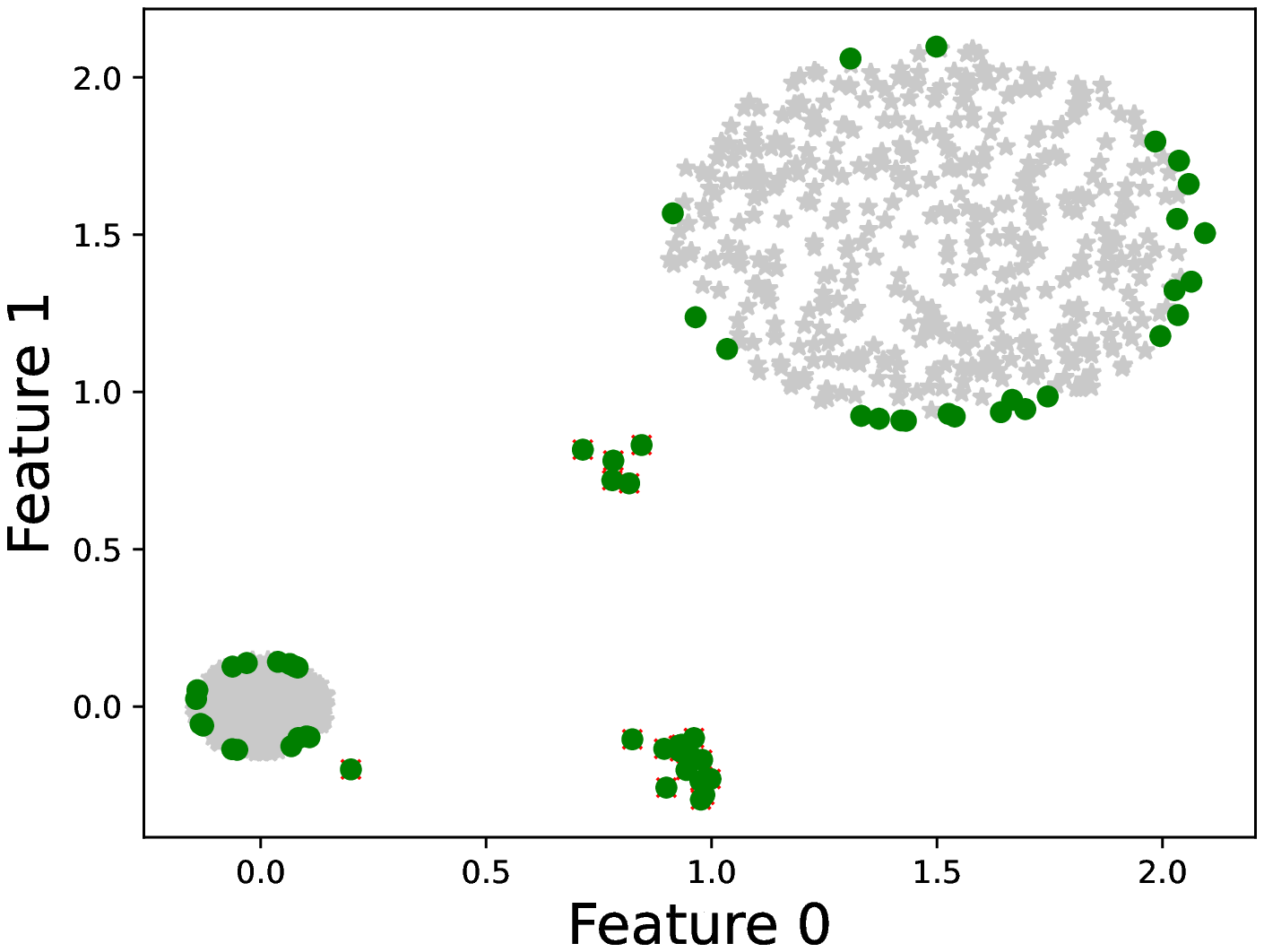}
    \caption{Local Outlier Factor.}
    \end{subfigure}
    \medskip
    \begin{subfigure}{0.4\textwidth}
    \centering
    \includegraphics[width=\textwidth]{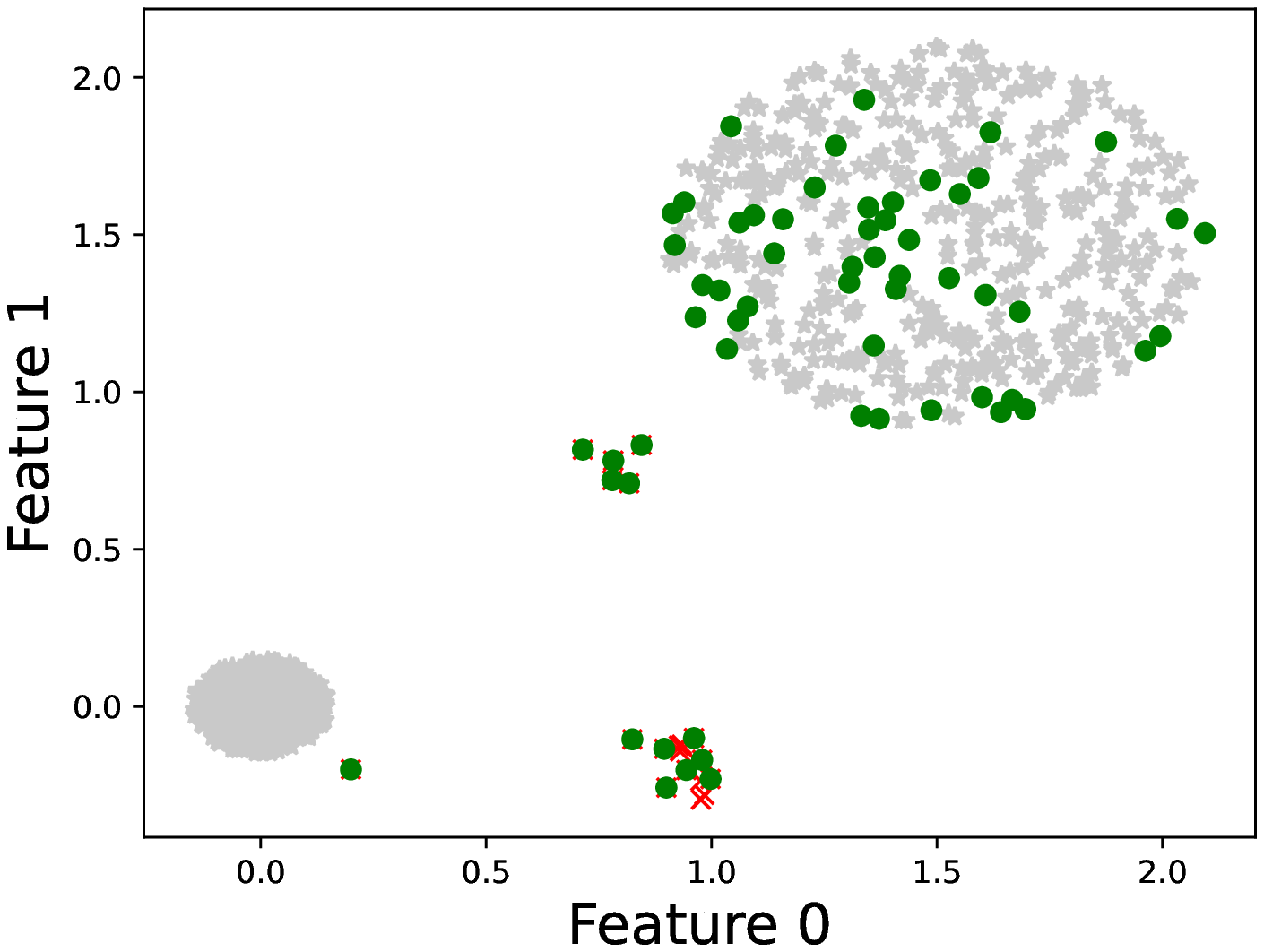}
    \caption{AIDA (E1).}
    \label{fig:example_2D_expectation1}
    \end{subfigure}
    \begin{subfigure}{0.4\textwidth}
    \centering
    \includegraphics[width=\textwidth]{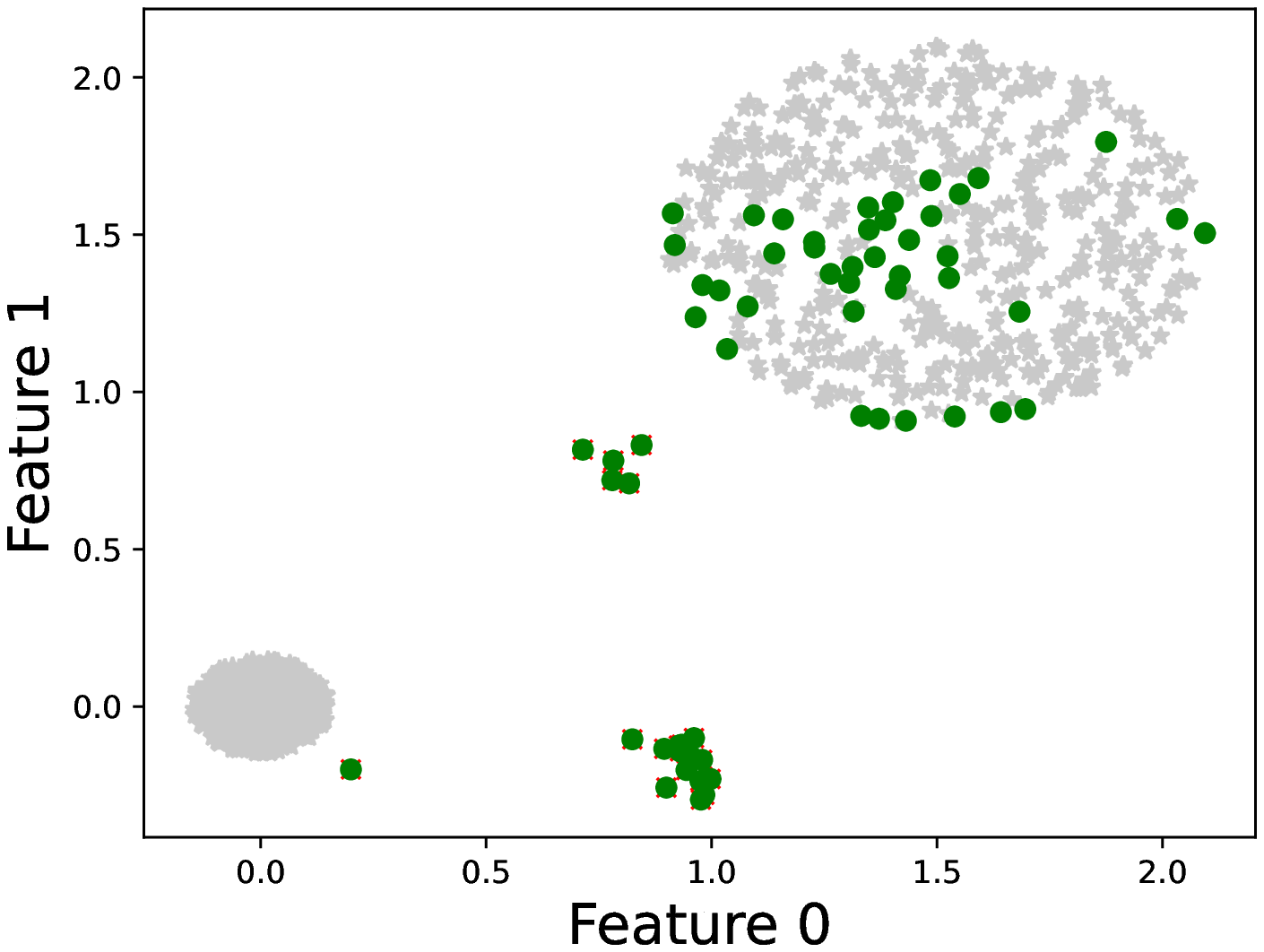}
    \caption{AIDA (V1).}
    \end{subfigure}
    \medskip
    \begin{subfigure}{0.4\textwidth}
    \centering
    \includegraphics[width=\textwidth]{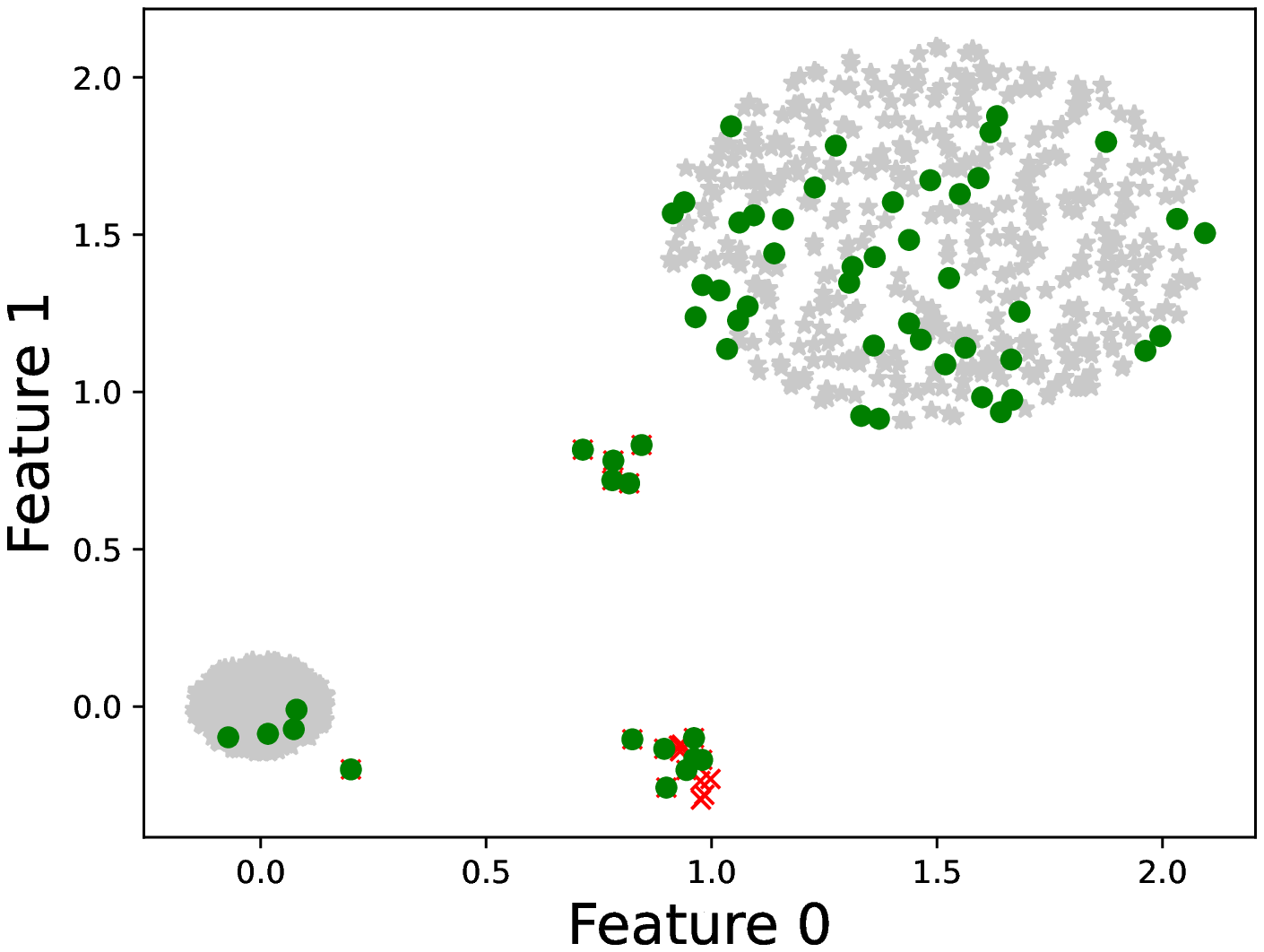}
    \caption{AIDA (E2).}
    \label{fig:example_2D_expectation2}
    \end{subfigure}
    \begin{subfigure}{0.4\textwidth}
    \centering
    \includegraphics[width=\textwidth]{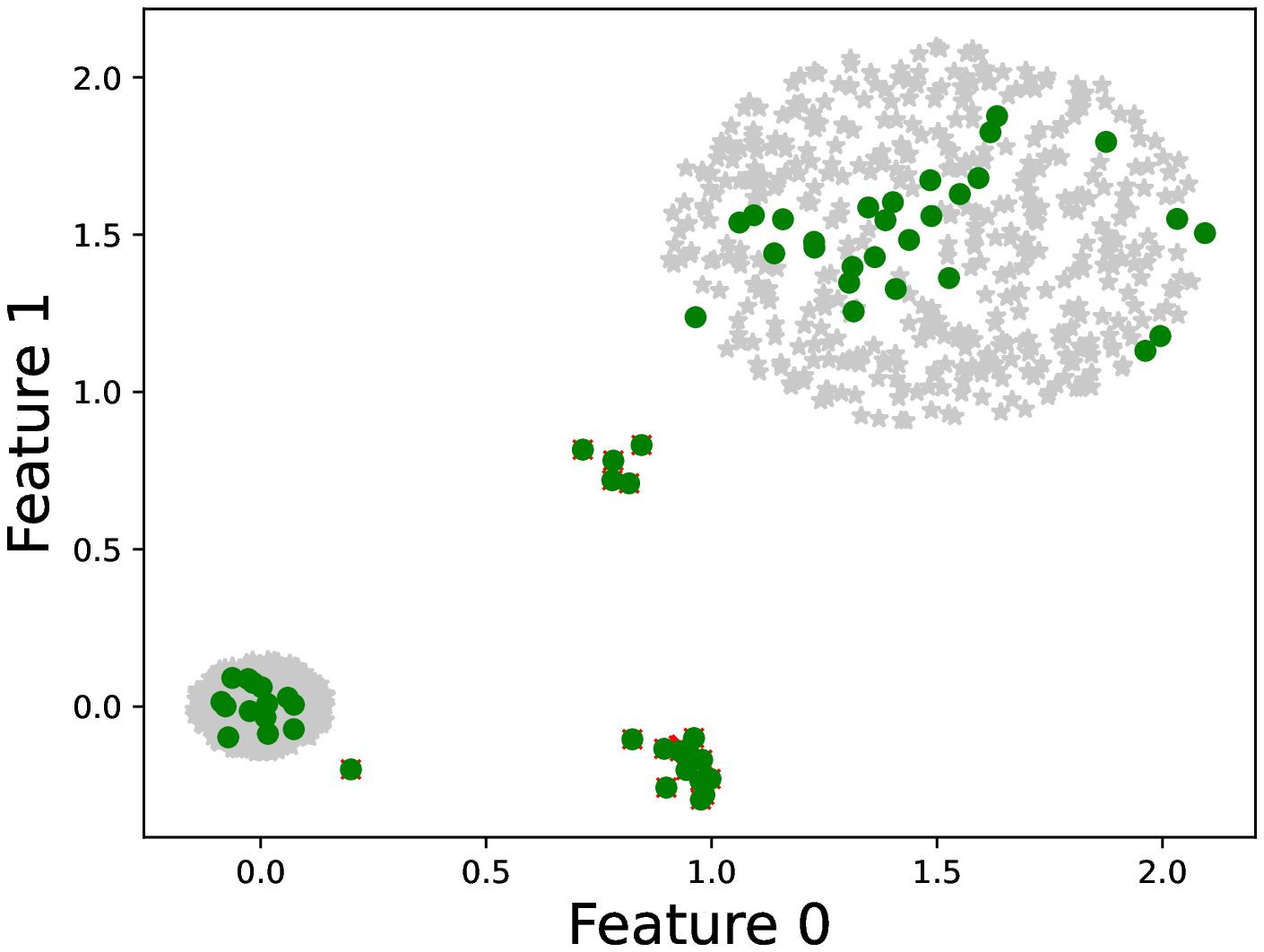}
    \caption{AIDA (V2).}
    \end{subfigure} 

    \caption{Comparison of the 60 most anomalous points detected by AIDA, iForest and LOF. For AIDA two different scores are used: expectation and variance. Inliers are marked with gray stars, detected outliers with green circles, and actual outliers with red crosses.}
    \label{fig:example_2D_outliers}
\end{figure}

We also present in \Cref{fig:example_2D_runtime} the computational time (in seconds) spent by AIDA on the test phase, described in \Cref{alg:aida_testing}, on several data sets of different dimensionality $d$ and number of observations $n$. In particular, we set $N = 100$, $\psi_{min} = 50$, $\psi_{max} = 512$ in \Cref{alg:aida_training} and use \Cref{eq:var_expo} as score function in \Cref{alg:aida_testing}. In \Cref{fig:example_2D_runtime_dim}, the number of observations was set to $n = 1000$, and in \Cref{fig:example_2D_runtime_n}, we fixed $d = 50$. From \Cref{fig:example_2D_runtime_dim,fig:example_2D_runtime_n} we observe that the computational time increases linearly both with the number of features and the number of observations, due to the use of subsamples.

\begin{figure}[ht]
    \centering
    \begin{subfigure}{0.4\textwidth}
    \centering
    \includegraphics[width=\textwidth]{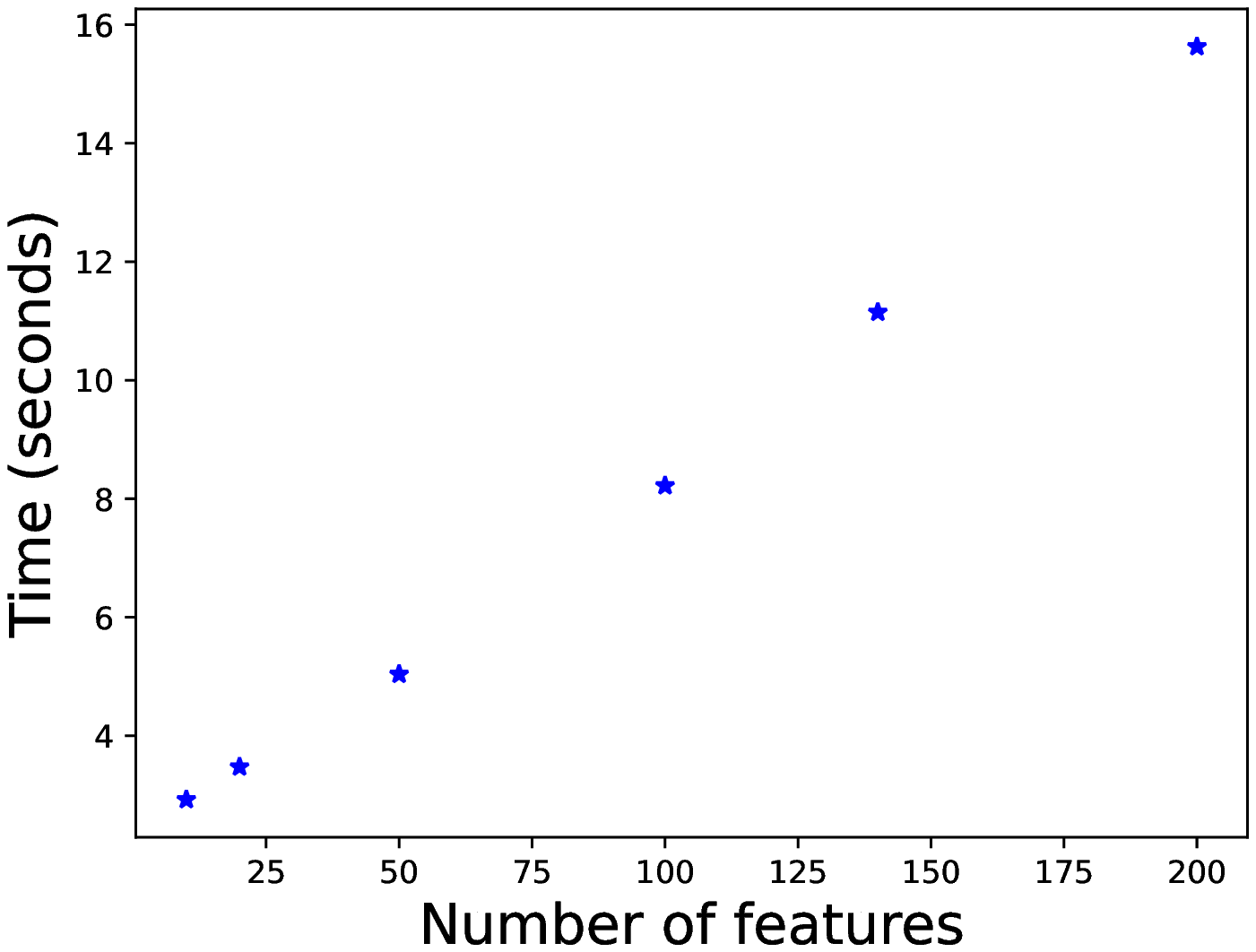}
    \caption{Runtime as a function of $d$.}
    \label{fig:example_2D_runtime_dim}
    \end{subfigure}
    \begin{subfigure}{0.4\textwidth}
    \centering
    \includegraphics[width=\textwidth]{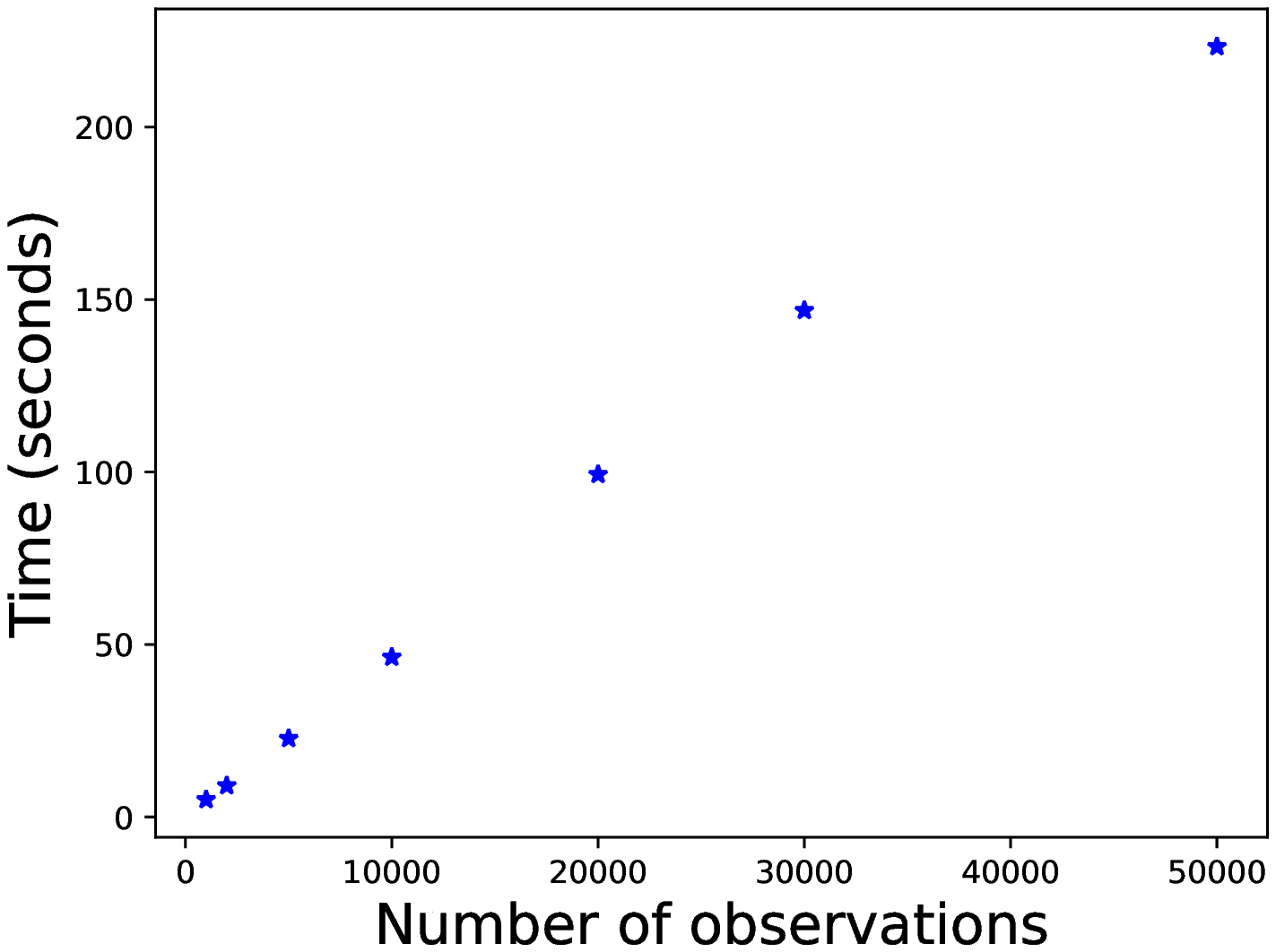}
    \caption{Runtime as a function of $n$.}
    \label{fig:example_2D_runtime_n}
    \end{subfigure}

    \caption{Computational times of the AIDA algorithm as a function of the dimensionality $d$ and the number of observations $n$. In the left plot (a) we fixed $n = 1000$, and, in the right plot (b), we set $d = 50$.}
    \label{fig:example_2D_runtime}
\end{figure}

\section{Explainability}\label{sec:explain}

In many practical contexts, the ability to explain why a certain observation is labelled an outlier is as important as the anomaly detection process itself. Especially in data sets with hundreds, or even thousands, of features, anomaly explanation can be a very complex and time-consuming task. Furthermore, in settings where only outliers generated by a specific mechanism are interesting (e.g. frauds or illegal transactions in financial data sets), having a preliminary understanding of which features characterize an outlier can serve as a filter to discard anomalies generated by other causes.

It is, however, difficult to extract explanations from distance-based methods, since they compress information from every feature into the distance metric \cite{panjei2022}. Exploring every possible subspace with the aim of finding subsets of features where the outliers are more remarkable is, of course, also not a viable option due to the curse of dimensionality.

In this article, we propose an explanation method for distance-based methods combining AIDA and the Simulated Annealing (SA) algorithm, that is commonly used in many other settings, such as global optimization and clustering (see \cite{kirkpatrick1983,ni2007,philipp2007}). Due to the partial inclusion of the annealing process in the explanation method, we call it the \textit{Tempered, Isolation-based eXplanation method} (TIX), described in \Cref{sec:tix}. We compare the inclusion of the SA acceptance criterion with the standard ``greedy'' approach \cite{mokoena2022} in \Cref{sec:sa_greedy}, and propose a possible refinement in \Cref{sec:refine}. In order to facilitate the interpretation of the results, we also propose the use of \textit{distance profile plots (DPP)}, which we define in \Cref{sec:dpp}.

\subsection{TIX algorithm}\label{sec:tix}

A good explanation method should be able to determine which features are most relevant to define outliers. In the context of the AIDA algorithm, this means finding the minimal feature subspace in which an outlier is easiest to isolate. However, due to the curse of dimensionality, it is computationally unfeasible to explore all the existing feature subspaces. One possibility to deal with this aspect is to use a so-called backward procedure in which, starting with the full feature space, we remove one feature at a time, and check whether the point of interest is easier to isolate in this reduced feature subspace. If that is the case, the chosen feature is deemed irrelevant and removed from the explanation process. Repeating this process until only the most relevant features are left is known as a ``greedy'' sequential search \cite{mokoena2022}.

Naturally, an accurate explanation method should aim at minimizing the number of important features, such that, if an outlier is equally easy to isolate in two different feature subspaces, the subspace with the least number of features should be preferred. For this reason, we propose a penalization mechanism which is based on the acceptance criterion of SA \cite{kirkpatrick1983}, so that explanations with only a few features receive a higher importance score than explanations with a larger number of features.

The procedure is as follows: given a potentially interesting outlier $X$ and a subsample $\bm{Y}_{\psi_i}$, for $i = 1,...,N$, we start by computing the score of $X$ with respect to $\bm{Y}_{\psi_i}$ using the full feature subspace $\mathcal{J} = \{1,...,d\}$, i.e. $f_{\mathcal{J}}(X) = score(X|\bm{Y}_{\psi_i},\mathcal{J})$. We randomly select an index $j$ from $\mathcal{J}$ and compute $f_{\mathcal{J}_{-j}}(X) =  score(X|\bm{Y}_{\psi_i},\mathcal{J}_{-j})$, where $\mathcal{J}_{-j}$ indicates that feature $j$ has been removed from $\mathcal{J}$. If $f_{\mathcal{J}_{-j}}(X) \geq f_{\mathcal{J}}(X)$, we set $\mathcal{J} = \mathcal{J}_{-j}$ and repeat the process.

On the other hand, if $f_{\mathcal{J}_{-j}}(X) < f_{\mathcal{J}}(X)$ we define the quantity
\begin{equation}\label{eq:accep_crite}
    p_j = \exp\left(\frac{f_{\mathcal{J}_{-j}}(X) - f_{\mathcal{J}}(X)}{f_{\mathcal{J}}(X) \cdot T}\right),
\end{equation}
where $T > 0$, and draw a uniform random variable $V \sim U(0,1)$. If $p_j \geq V$, we remove feature $j$ by setting $\mathcal{J} = \mathcal{J}_{-j}$. Otherwise, nothing changes. This process is repeated until a maximum number of iterations is reached, or until only one feature remains $(|\mathcal{J}| = 1)$. Each feature receives a score based on how many iterations of this process it has ``survived''. We refer to the number of iterations as the \textit{path length}. Relevant features should be more difficult to remove, and should therefore have a longer path length than irrelevant features. It is recommended to run this algorithm a fixed number of times $M$ in order to have a consistent estimate of the path length for each feature. \Cref{alg:tix} provides a pseudocode of the proposed TIX method.

\begin{algorithm} [!ht]
\caption{TIX: pseudocode.}\label{alg:tix}
\begin{algorithmic}
\State Load the potential outlier $X$.
\State Choose a score function from \Cref{eq:mean_expo} or \Cref{eq:var_expo}.
\State Set $\alpha$, $\bm{\omega}^{num}$ and $\bm{\omega}^{nom}$ equal to 1.
\For{$k = 1,...,M$}
    \For{$i = 1,...,N$}
        \State Set $\mathcal{J} = \{1,...,d\}$.
        \State Set $f_{\mathcal{J}}(X) = score(X|\bm{Y}_{\psi_i},\mathcal{J})$.
        \State Set $l = 0$.
        \State Set $T \sim U(T_{min},T_{max})$.
        \While{($l < L$) \texttt{or} ($|\mathcal{J}| > 1$)}
           \State Randomly select an index $j$ from $\mathcal{J}$.
           \State Set $f_{\mathcal{J}_{-j}}(X) = score(X|\bm{Y}_{\psi_i},\mathcal{J}_{-j})$.
           \If{$f_{\mathcal{J}_{-j}}(X) \geq f_{\mathcal{J}}(X)$}
             \State Set $\mathcal{J} = \mathcal{J}_{-j}$.
             \State \texttt{path\_length}$(j,i,k) = l$.
           \Else
             \State Compute $p_j$ using \Cref{eq:accep_crite}.
             \State Set $V \sim U(0,1)$.
             \If{$p_j > V$}
                \State Set $\mathcal{J} = \mathcal{J}_{-j}$.
                \State \texttt{path\_length}$(j,i,k) = l$.         
             \EndIf
           \EndIf
           \State Set $l = l +1$.
        \EndWhile \\
        \If{$|\mathcal{J}| > 1$}
            \For{each $j\in \mathcal{J}$}
                \State \texttt{path\_length}$(j,i,k) = l$ .
            \EndFor
        \EndIf    
    \EndFor
\EndFor\\

\For{$j = 1,...,d$}
    \State Aggregate the path lengths obtained over all subsamples and iterations.
\EndFor
\end{algorithmic}
\end{algorithm}

Notice that, since we are using the absolute score, it is possible that outlier scores are higher in high-dimensional settings, even if there exists a small subset of features where the sample could be easily isolated. This is due to the curse of dimensionality, by which the distance to the nearest and the furthest neighbours converges to the same value \cite{aggarwal2001}. This is shown in \Cref{fig:dpp_outlier_hics}, where we provide the DPs of an outlier in the HiCs data set 20.1 (for more details: \Cref{sec:hics}) using incremental feature spaces. Concretely, the top DP uses only the first feature, the second-top DP uses the first two features for the distance calculation, and so on, until the bottom DP, which uses the full feature space. From the results of \Cref{fig:dpp_outlier_hics}, it is clear that the outlier is easily isolated when we consider the first three features, and this is indeed how the outlier was generated \cite{keller2012}. 

Nonetheless, the same outlier is even easier to isolate when the full feature space is considered. Therefore, if we do not include the acceptance criterion of SA using \Cref{eq:accep_crite}, TIX would hardly remove any features, and the explanations would not be informative. This is a problem similar to that of model selection in regression models, where the goal is to find a parsimonious linear predictor \cite{Dunn2018}, i.e. the one with the desired explanatory power and the smallest number of features. In fact, adding more features in regression models reduces the in-sample estimation error, but it also generates overfitting, thus affecting generalization. Thus, in order to reduce the number of features, the complexity of the model must be penalized, for instance, by using criteria like the Akaike Information Criterion (AIC) \cite{kotzbreakthroughs}. One of the main advantages of AIC and similar metrics is that results do not need to be recalculated, making them computationally efficient. While the same concept of a fixed penalization could be applied to anomaly explanation methods, it is not clear how to define such penalization in practice. Hence, a random penalization based on the acceptance criterion of SA generates robust results in several diverse settings, avoiding the problem of defining a parameter whose values are not clear, and difficult to use in practice.

In any case, TIX contains a parameter $T$, the analogue of the temperature in the original SA algorithm, which affects the explanation results. From \Cref{eq:accep_crite}, it is clear that large values of $T$ increase the acceptance rate, and vice versa. Hence, it is desirable to find a value of $T$ that only maintains the most relevant features. For that purpose, we set $\Delta = (f_{\mathcal{J}_{-j}}(X) - f_{\mathcal{J}}(X))/f_{\mathcal{J}}(X)$ and redefine $T$ in terms of the relative score difference $\Delta$. In particular, given a specific value for $\Delta$, we look for the value of $T$ such that $e^{-\Delta/T} = 0.9$, which implies
\begin{equation}\label{eq:T_rule_thumb}
    T = \frac{\Delta}{\log(\frac{10}{9})}.
\end{equation}
\Cref{eq:T_rule_thumb} can be interpreted as the ``temperature'' such that the acceptance probability of a particular $\Delta$ is $0.9$. This effectively changes the problem from choosing $T$ into choosing $\Delta$, which we find easier to interpret. 

The probability threshold of $0.9$ is chosen to enhance interpretability. With such a high probability of acceptance, $\Delta$ should be given small values in order to maintain relevant features. In particular, we suggest setting $\Delta = 0.01$, so that a relative score difference of $1\%$ has a $90\%$ chance of being accepted. Another alternative, which alleviates the effects of a poor choice of $\Delta$, is to randomly select $\Delta$ in a given interval, as it is shown in \Cref{alg:tix} for $T$. We suggest $\Delta \sim U(\Delta_{min},\Delta_{max})$, with $\Delta_{min} = 0.01$ and $\Delta_{max} = 0.015$. We will use this particular setting for all the experiments considered in \Cref{sec:results}.

Finally, the best and worst-case time complexity statements of the TIX algorithm for a single observation are approximately $\mathcal{O}(M N (\log(\psi_m)+2) \psi_m d)$ and $\mathcal{O}(M N (\log(\psi_m)+2) \psi_m L)$, respectively, for $L > d$. If $T$ is too large, then features are always removed, and the condition $|\mathcal{J}| = 1$ is met in $d-1$ steps. In contrast, if $T$ is too small, it will be unlikely to remove any features and the algorithm will not stop until the maximum number of iterations $L$ is reached. Notice that it is not necessary to recompute all the distances at every iteration. Since only one feature is removed at a time, it is more efficient to compute the contribution of that feature to the distance metric, and remove it from the distances computed in the previous iteration, which has an average time complexity of $\mathcal{O}(\psi_m)$. Thus, the main bottleneck of the algorithm is sorting the distance values and computing the scores, which have time complexities of $\mathcal{O}(\log(\psi_m)\psi_m)$ and $\mathcal{O}(\psi_m)$, respectively.

\subsection{``Greedy'' approach vs. SA approach}\label{sec:sa_greedy}

We analyze the benefits of the SA acceptance criterion, as defined in \Cref{eq:accep_crite}, from a theoretical and practical perspective. We prove that, in the simple scenario of \Cref{eq:mean_uni}, the greedy approach fails to remove irrelevant features, even when their contribution to the outlier score is infinitesimally small. We also compare the performance of the TIX algorithm with and without the acceptance criterion with a simple synthetic example.

The setting is as follows: assume that we have the DP of a potential outlier $X^*$ with respect to a data set $\bm{X}_n$ of dimensionality $d$, i.e. DP$(X^*,\bm{X}_n)$. For simplicity, we further assume that the outlier score function is given by the opposite of \Cref{eq:mean_uni}---so that anomalous points have higher scores than inliers---that there are no nominal features and that $p = 1$ in \Cref{eq:distance}, i.e. we use the Manhattan distance.

Applying the methodology developed in \Cref{sec:gen_set}, the outlier score of $X^*$ with respect to the full feature space $\mathcal{J}$, denoted $f_{\mathcal{J}}(X^*)$, is calculated by using the sorted distances DP$(X^*,\bm{X}_n)$ as input in \Cref{eq:mean_uni}. Next, assume that the contribution of feature $j$ to the distance computation in \Cref{eq:distance} is such that $l_1(X^*,X_i|\mathcal{J}) = l_1(X^*,X_i|\mathcal{J}_{-j}) + \Delta x$, for $i=1,...,n$---where $\mathcal{J}_{-j}$ represents the feature subspace $\mathcal{J}$ without feature $j$, as in \Cref{sec:tix}---and $\Delta x > 0$. Since substraction of a constant  does not affect the ranks of the distances, we also have that DP$(X^*,\bm{X}_n|\mathcal{J}) = \text{DP}(X^*,\bm{X}_n|\mathcal{J}_{-j}) + \Delta x$. In this simple example, the difference between the outlier scores with and without feature $j$ is given by
\begin{equation}\label{eq:diff_score1}
    f_{\mathcal{J}_{-j}}(X^*) - f_{\mathcal{J}}(X^*) = \sum_{i=2}^{n-1} (Z_{i+1}-Z_i) \left(\frac{1}{Z_{i+1}-Z_1} - \frac{1}{Z_{i+1}-\Delta x-Z_1}\right),
\end{equation}
where $Z_i$ is the $i$-th sorted distance in DP$(X^*,\bm{X}_n|\mathcal{J})$, and $Z_1$ is always zero ($l_1(X^*,X^*) = 0$) in the context of the AIDA algorithm.

\Cref{eq:diff_score1} can be further simplified as 
\begin{equation}\label{eq:diff_score2}
    f_{\mathcal{J}_{-j}}(X^*) - f_{\mathcal{J}}(X^*) = -\Delta x \sum_{i=2}^{n-1} \frac{Z_{i+1}-Z_i}{(Z_{i+1}-Z_1)(Z_{i+1}-\Delta x-Z_1)}.
\end{equation}
Since $\Delta x >0$, it is clear that \Cref{eq:diff_score2} is always negative, regardless of $\Delta x$. Therefore, a greedy approach will never remove feature $j$, irrespective of how small $\Delta x$ is. On the other hand, the probability of removing the same feature with the SA approach converges to one as $\Delta x \rightarrow 0$, which can be easily verified by substituting \Cref{eq:diff_score2} into \Cref{eq:accep_crite}.

\begin{remark}
In data sets of very high dimensionality, the contribution of each individual feature to \Cref{eq:total_distance} will be small compared to the remaining $d-1$ features, which is another consequence of the curse of dimensionality. Hence, it is expected that a greedy approach will hardly remove any features in data sets where $d$ is large, which is precisely when an explanation method would be most important. In contrast, the SA approach employed by TIX is likely to remove any feature during the first iterations of \Cref{alg:tix} (a value of $\Delta x$ close to zero implies a probability of acceptance close to $1$ in \Cref{eq:accep_crite}), even those features that are actually relevant to the explanation process. Conversely, as the number of features decreases during the last stages of the TIX algorithm, it becomes more difficult to remove relevant features, while irrelevant features are still easy to discard. This is also the reason why it is recommended to run $\Cref{alg:tix}$ several times $(M>1)$. Otherwise, it is possible that the relevant features are removed first, resulting in inaccurate results.
\end{remark}

We illustrate these aspects with a synthetic example, which we label as the Cross data set, due to the shape of \Cref{fig:cross_example}. Concretely, we generate a data set of $n = 1000$ observations with different dimensionalities $d$, such that all the observations follow a uniform random distribution in the first $d-2$ features, and a single outlier is contained in the last two features (see \Cref{fig:cross_example}). Thus, from the point of view of the outlier, the number of irrelevant of features is $d-2$, and we expect an accurate explanation method to return the last two features as the most relevant ones. Notice that, given the shape of \Cref{fig:cross_example}, the outlier cannot be detected by looking at each of the last two features separately, hence the explanation results will not be accurate unless both features receive a high importance score.

\begin{figure}[ht]
    \centering
    \begin{subfigure}{0.4\textwidth}
    \centering
    \includegraphics[width=\textwidth]{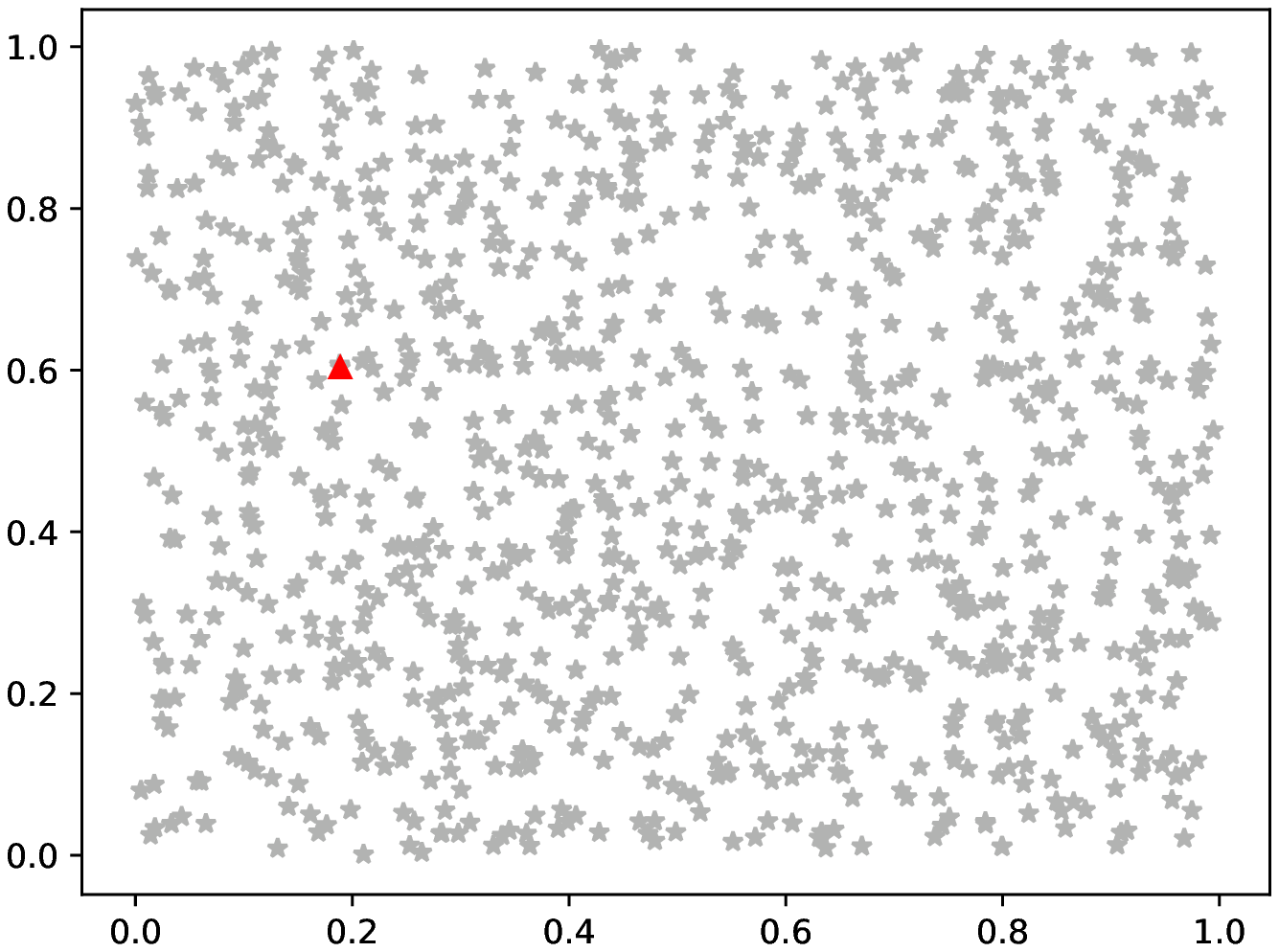}
    \caption{2D plot of two irrelevant features.}
    \end{subfigure}    
    \begin{subfigure}{0.4\textwidth}
    \centering
    \includegraphics[width=\textwidth]{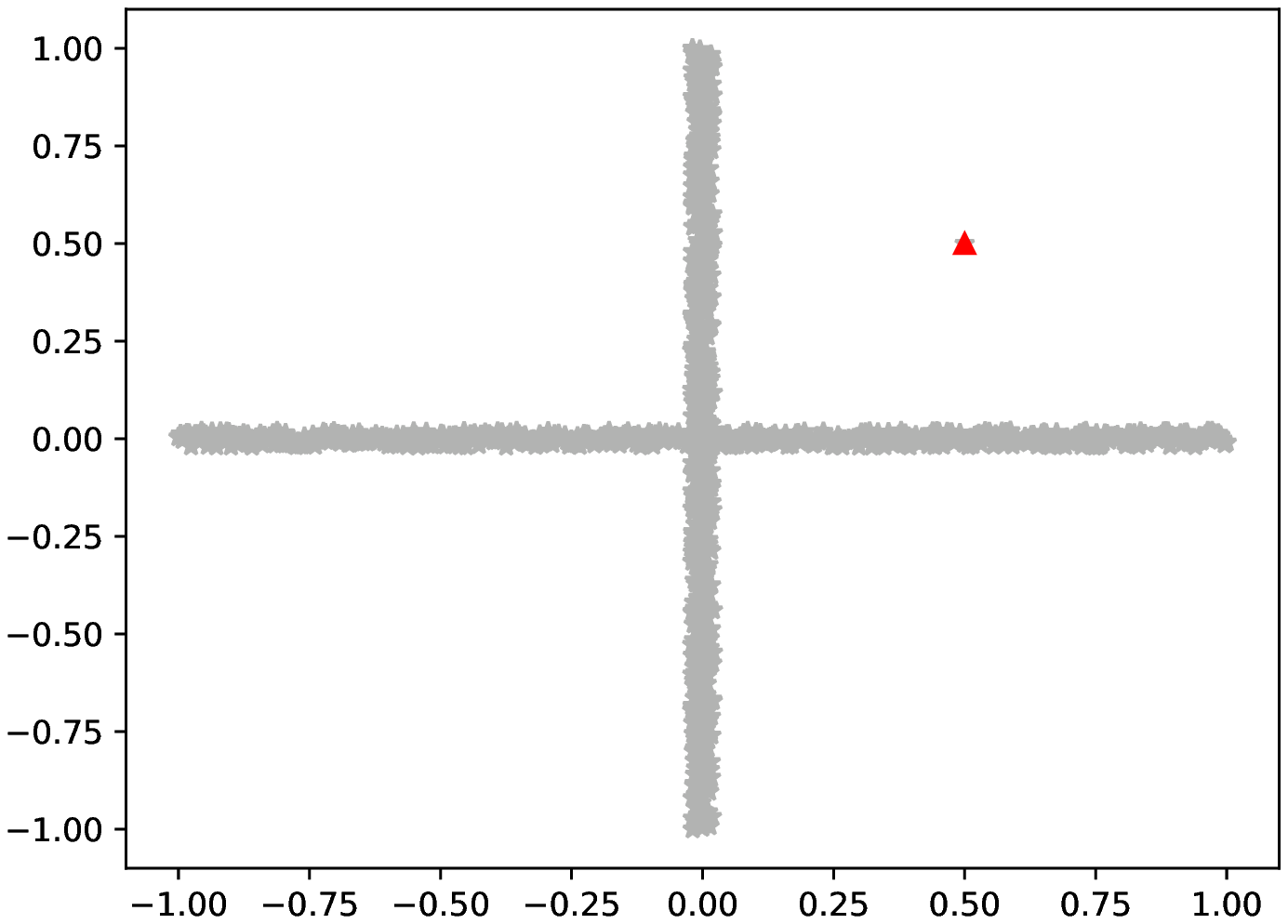}
    \caption{2D plot of the relevant features.}
    \end{subfigure}
    \caption{Plot of the Cross data set in two-dimensional projections of irrelevant features (left) and relevant features (right). Inliers are marked with gray stars, and the outlier with a red triangle.}
    \label{fig:cross_example}
\end{figure}

We test the TIX algorithm on this data set with and without the acceptance criterion (SA vs. greedy) for $d = 5$, $10$ , $20$, $30$, $40$, $50$ and $100$. In the SA approach, we set $T_{min} = 0.01$ and $T_{max} = 0.015$, as explained in \Cref{sec:tix}, with $M = 10$ in both approaches. We also set $N = 100$, $\psi_{min} = 50$ and $\psi_{max} = 512$ in \Cref{alg:aida_training}. The performance of the algorithms is measured in terms of the minimal feature subspace that contains the relevant features. That is, how many features need to be analyzed until the relevant features are found. For example, if the last two features receive the third and fifth highest scores, we need to analyze five features until we find the most relevant ones. The smaller the size of the minimal feature subspace, the more accurate are the explanation results. In this particular example, a minimal feature subspace of size two implies a perfect score for the explanation method.

The results are displayed in \Cref{tab:tix_comparison}, where we report the average minimal feature subspace, with its corresponding standard deviation, over 10 different executions\footnote{This is not the same as the number of iterations $M$.}. As expected from \Cref{eq:diff_score2}, the performance of the greedy approach quickly decays as the dimensionality increases. In contrast, the inclusion of the SA acceptance criterion yields perfect results for $d \leq 50$, since the relevant features were always found in every execution of the algorithm. Nonetheless, if we further increase the dimensionality, even the results obtained with the SA approach will start to deteriorate, as it is the case for $d = 100$.

In fact, the magnitude of the standard deviation in the SA approach clearly indicates that the results are not stable, and a higher $M$ is required. Increasing $M$ from $10$ to $100$ yields a minimal feature subspace of $2.4_{\pm 0.9}$, which is close to a perfect score. However, a large $M$ also makes the algorithm computationally expensive, thus in \Cref{sec:refine} we propose a refinement procedure which yields similar results at a reduced computational cost.

\begin{table}[ht]
  \ra{1.3}
  \caption{Size of the average minimal subspace returned by TIX with and without the acceptance criterion of \Cref{eq:accep_crite}.}
  \centering
  \begin{threeparttable}
    \begin{tabular}{lrr}\midrule\midrule 
       $d$ & \multicolumn{1}{c}{SA} & \multicolumn{1}{c}{Greedy} \\ \midrule
        $5$ & $2.0_{\pm 0.0}$ & $2.0_{\pm 0.0}$ \\ 
        $10$ & $2.0_{\pm 0.0}$ & $3.2_{\pm 0.4}$  \\ 
        $20$ & $2.0_{\pm 0.0}$ & $4.1_{\pm 1.1}$  \\
        $30$ & $2.0_{\pm 0.0}$ & $13.7_{\pm 2.4}$ \\ 
        $40$ & $2.0_{\pm 0.0}$ & $26.3_{\pm 1.9}$ \\ 
        $50$ & $2.0_{\pm 0.0}$ & $45.1_{\pm 2.3}$ \\ 
        $100$ & $15.5_{\pm 27.9}$ & $87.9_{\pm 3.1}$ \\ 
        \midrule\midrule
    \end{tabular}
  \end{threeparttable}
  \label{tab:tix_comparison}
\end{table}

\subsection{Refinement step}\label{sec:refine}

The TIX algorithm described in \Cref{sec:tix} can be embedded\footnote{In principle, this refinement step can be applied to any explanation method that returns numeric scores or ranks per feature.} in a recursive procedure to further improve the explanation results. Concretely, once the importance scores have been returned by TIX, instead of directly reporting these scores to the analyst, a further refinement can be done by first selecting the top $k$ features, and then reapplying the TIX algorithm using these relevant features only. This refinement step can be repeated for decreasing values of $k$ until a desired $k_{min}$ is reached.

\begin{algorithm}
\caption{Refinement step.}\label{alg:refine}
\begin{algorithmic}[1]
\State Load the potential outlier $X$.
\State Set $\mathcal{J}$ equal to the full feature space.
\State Set $k = d$.
\State Set $\beta>0$.
\While{$k \geq k_{min}$}
  \State Compute the importance scores of the features in $\mathcal{J}$ with the TIX algorithm (\Cref{alg:tix}). 
  \State Set $k = \max(\lfloor k/\beta \rfloor,k_{min})$.
  \State Set $\mathcal{J}$ equal to the $k$ most relevant features.
  \State Compute the final score of the removed features.
\EndWhile
\State Compute the final score of the remaining features.

\end{algorithmic}
\end{algorithm}

\Cref{alg:refine} shows the pseudocode of the refinement step. The crucial part is how to determine the importance score of the removed features, since it is not trivial how to aggregate scores from different iterations. One possibility is to use ranks as the final scores. In that case, the scores returned by the TIX algorithm can be used to determine the ranks of the removed features at each iteration. Another possibility is to modify the scores so that they are compatible between iterations. In particular, we suggest to add $d-k$ to the path lengths of each feature. The reasoning is the following: in order to go from $d$ to $k$ features in the TIX algorithm, $d-k$ is the minimum path length that must be covered. Furthermore, \Cref{alg:refine} reduces to \Cref{alg:tix} if $d/\beta < k_{min}$. Since $\beta$ controls the degree of the refinement process in \Cref{alg:refine}, we refer to it as the \textit{refinement rate}.

In \Cref{sec:hics}, we will test the performance of \Cref{alg:refine} for several values of $\beta$.

\subsection{Distance profile plot (DPP)}\label{sec:dpp}

Once the most important features have been returned by the explanation method, it is still the task of the analyst to determine how many features are actually relevant, or which combination of them best explains the outliers. For that purpose, visualization techniques such as 2D plots are especially popular due to their interpretability \cite{gupta2019}. Nonetheless, sometimes the interactions among features require us to consider more than two features at the same time. In that case, 2D plots are not able to capture the outlier behaviour, giving the incorrect impression that the plotted features are not relevant. 

To summarize the outlier information in subspaces using any number of features, we propose the Distance Profile Plot (DPP). In a DPP, several DPs are plotted together to find the most relevant outlier subspaces. Each DP corresponds to a specific number of features, that is to a specific subspace. For each subspace, the distances from the point of interest are represented using a boxplot, which allows for a quick grasp of their distribution.

Since the distance of a point from itself is 0, every point of interest will always be the first point on the left in the DP. The more such a point can be isolated from the others, the more the whiskers of the boxplot will tend to shrink away from it, while the interquartile range of the distances will tend to condense around the median distance. Conversely, a point that in a given subspace cannot be easily isolated will be touched by the whiskers, and the interquartile range will be larger. An example of DPP is presented in \Cref{fig:dpp_hics}, where we show the DPP of an outlier and of an inlier in the HiCs data set 20.1, which is described in \Cref{sec:hics}. In particular, the outlier is known to be anomalous only in the feature subspace containing the first three features \cite{keller2012}. 

The top DP in \Cref{fig:dpp_hics} corresponds to the distances computed using only one of these three features, the second-top DP uses two of these features, and so on. Thus, if we analyze the DPs in \Cref{fig:dpp_outlier_hics} from top to bottom, we observe that the first and second features alone are not relevant to explain the outlier behaviour, since the first point on the left is touched by the whiskers of the boxplots. It is only when we reach the third DP in \Cref{fig:dpp_outlier_hics} that the point gets isolated, as the left whisker moves away, indicating that the feature subspace composed of the first three features could be relevant to explain the anomalous observation (and that is indeed how the outlier was generated). Conversely, the DPP plot of the inlier displayed in \Cref{fig:dpp_inlier_hics} shows that this point is not easy to isolate in the same feature subspace.

A consequence of the curse of dimensionality is that the DPs of the outlier and the inlier are very similar when the number of features becomes large. In fact, the first point on the left in the DPP of \Cref{fig:dpp_inlier_hics} gradually becomes easier to isolate as the number of features $d$ increases, in line with the fact that the distance to the nearest and the furthest neighbours converges to the same value for large $d$ \cite{aggarwal2001}. This is connected to the deterioration of the greedy approach discussed in \Cref{sec:sa_greedy}.

In contrast, we observe a sharp change in the DPP of \Cref{fig:dpp_outlier_hics} when all the relevant features are included, instead of a gradual increment of the distance between the left-fringe and its nearest neighbours. Thus, sharp changes in the DPPs are associated to relevant features, while gradual distance increments are due to the curse of dimensionality and indicate irrelevant features.

\begin{figure}[ht]
    \centering
    \begin{subfigure}{0.45\textwidth}
    \centering
    \includegraphics[width=\textwidth]{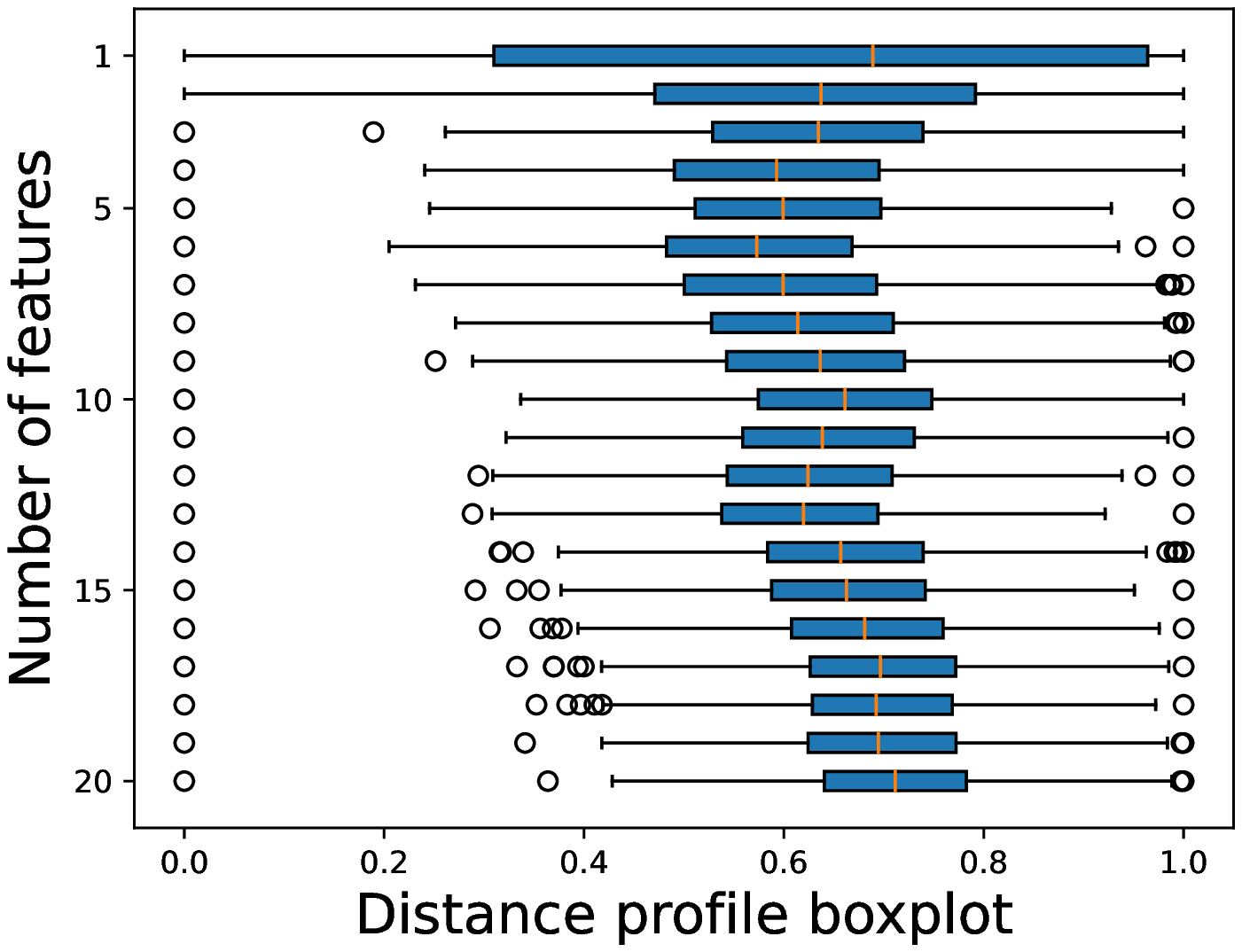}
    \caption{DPP of an outlier.}
    \label{fig:dpp_outlier_hics}
    \end{subfigure}    
    \begin{subfigure}{0.45\textwidth}
    \centering
    \includegraphics[width=\textwidth]{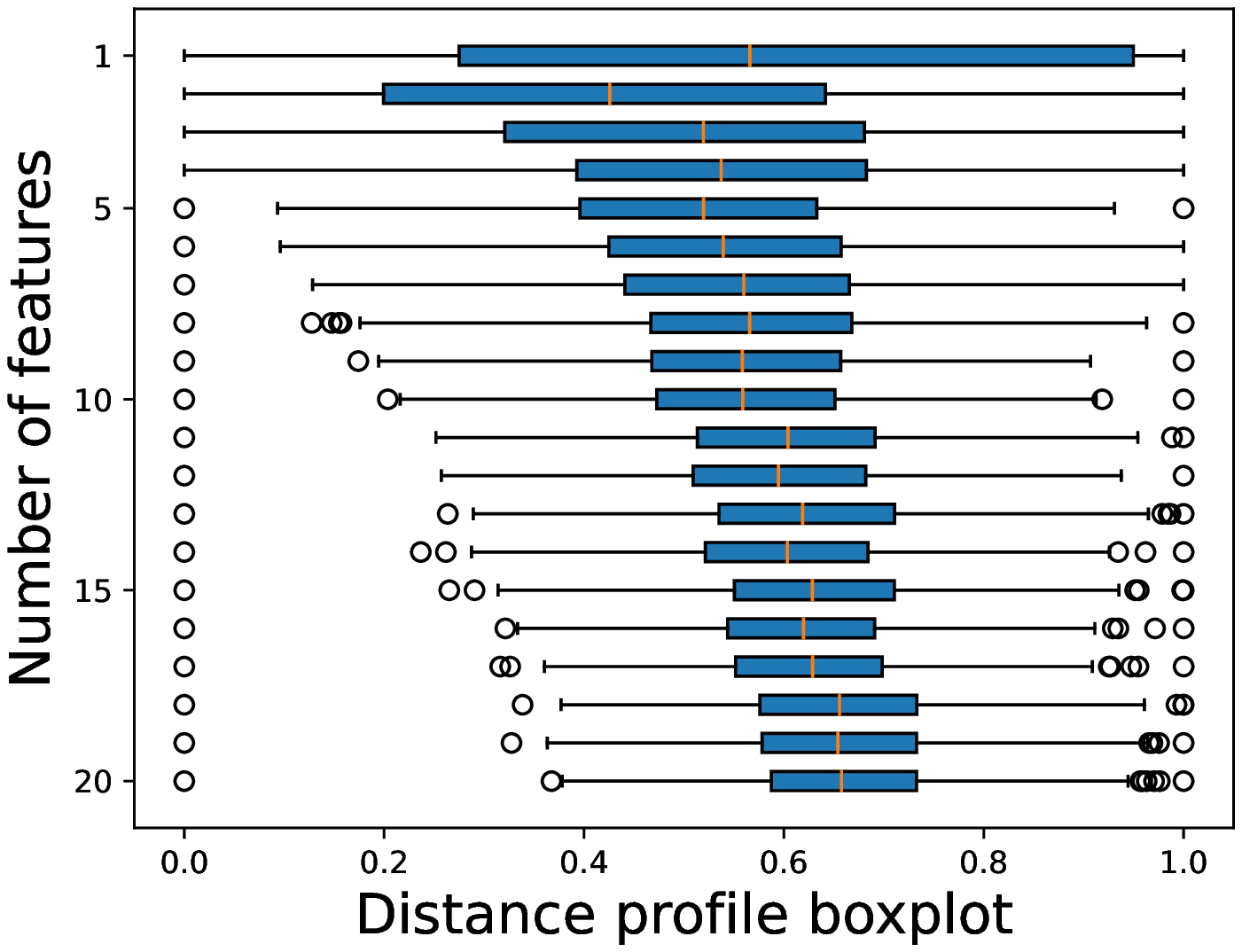}
    \caption{DPP of an inlier.}
    \label{fig:dpp_inlier_hics}
    \end{subfigure}
    \caption{Distance profile plot of an outlier (left) and an inlier (right) in the HiCs data set 20.1.}
    \label{fig:dpp_hics}
\end{figure}

In addition, while in \Cref{fig:dpp_hics} we have added the features in the order they appear in the data set, in the context of explanation algorithms, only the most relevant features should be used, so that the top DP corresponds to the most relevant feature, the second-top DP to the two most relevant features, and so on.
 
\section{Numerical results}\label{sec:results}

We test the performance of the proposed AIDA algorithm using artificial and empirical data sets, and compare it with several state-of-the-art anomaly detection methods: iForest, isolation using Nearest Neighbour Ensemble (iNNE) \cite{bandaragoda2018}, LOF and average kNN (AvgKNN) \cite{angiulli2002}. iForest is an isolation-based method, while LOF and AvgKNN are distance-based methods. iNNE, like AIDA, is a combination of both these concepts. Hence, we consider these models as good benchmarks to test AIDA.

In all experiments, we consider two settings for $\alpha$ in \Cref{eq:mean_expo,eq:var_expo}: $\alpha = 1$ and $\alpha \sim U(0.5,1.5)$. In the latter case, each subsample $\bm{Y}_{\psi_j}$, for $j=1,...,N$, has associated a value of $\alpha$ in the given interval. The reasoning is the same as the one given for $\Delta$ in \Cref{sec:tix}: randomizing $\alpha$ within a reasonable interval diminishes the risk of making a poor choice. We test both settings with the proposed outlier scores of \Cref{eq:mean_expo,eq:var_expo}, for a total of 4 different AIDA configurations. We use the letters E and V to indicate the score function, and the indicators 1 and R to indicate whether we use $\alpha = 1$ or a randomized alpha. For example, AIDA (VR) in \Cref{tab:auc_hics} refers to the AIDA algorithm using the variance as the score function with a randomized choice for $\alpha$.

Additionally, we set $N = 100$, $\psi_{min} = 50$ and $\psi_{max} = 512$. If the data set has dimensionality $d > 5$, we use feature bagging as described in \cite{lazaveric2005}. Otherwise, we use the full feature space. The aggregation of the scores over different subsamples is done using the \textit{Average of Maximum (AOM)} function, with the number of subsamples per bucket equal $q = 5$, as suggested in Section 4.3 of \cite{aggarwal2015}, for a total of 20 buckets. Regarding the distance metric, we use the Manhattan distance with all weights $\omega$ equal to one for AIDA, LOF and AvgKNN. Moreover, we set the number of neighbours to $k = \min(20,0.05\cdot n)$ in LOF and AvgKNN. With respect to iForest, we choose the number of trees equal to $100$ and the subsampling size to $256$. For iNNE, we set the number of trees to $100$ and the number of samples to $8$ \cite{bandaragoda2018}. Furthermore, since AIDA, iForest and iNNE are random algorithms, we report the average AUC over $10$ different runs, with their respective standard deviations\footnote{In some articles, it is common to test each algorithm for several parameter configurations and report the best performance (e.g., \cite{bandaragoda2018,pang2015}). However, as noted in \cite{aggarwal2015}, in practice it is not possible to know in advance whether a specific choice will yield good results. Thus, we have chosen typical parameter choices for each model and fixed them for all tests.}.

For the TIX method, we always consider \Cref{eq:var_expo} with $\alpha = 1$ as score function, and use the full feature space, so that the aggregation of the path lengths over different subsamples is consistent. Furthermore, the explanation method is also executed $10$ times to have a robust estimator of the expected path lengths, with $L = 50 \cdot d$ in \Cref{alg:tix}.

Finally, distance-based methods are sensitive to the scale of the numerical values, and this can introduce a serious bias in the results,  towards features with the largest magnitude \cite{james2021}. Hence, we normalize the empirical data sets (the artificial data sets are already normalized) using Z-scores, so that each feature contributes equally to the distance metric.

\begin{remark}
The algorithms were implemented in C++ using the g++ compiler (version 9.4.0) and they are available in the GitHub repository: \url{https://github.com/LuisSouto/AIDA}. Experiments were run using an Intel(R) Core(TM) i7-7700HQ CPU @ 2.80GHz processor. 
\end{remark}

\subsection{HiCs data sets}\label{sec:hics}

For the artificial data, we consider the data sets from \cite{keller2012}, to which we refer for a detailed description. We label these data sets as the \textit{HiCs data sets}, since these examples were constructed to illustrate the performance of the HiCs algorithm. What makes the HiCs data sets challenging is that the outliers are hidden in multidimensional subspaces of dimension at least two, and up to five. Each outlier looks like an inlier in any other subspace, therefore the level of irrelevant features for a particular anomaly is very high. We refer to the number of features that characterizes an outlier as $r$, so that, if an outlier is defined by a feature subspace consisting of three features, then $r=3$. In the HiCs data sets, $r$ can take values from $2$ to $5$.

Another advantage of using these data sets is that we also know which features are relevant for each outlier, providing a useful benchmark for the TIX algorithm. There are a total of $21$ data sets, consisting of $3$ data sets of dimensionality $10$, $20$, $30$, $40$, $50$, $75$ and $100$, respectively, with a constant sample size of $n = 1000$. We give to each data set a label consisting of its dimensionality and its version number. For example, the second data set with $d = 30$ is labelled as \textit{HiCs 30.2}. 

\subsubsection{Anomaly detection in the HiCs data sets}

The comparison between the different models is presented in \Cref{tab:auc_hics}, which clearly indicates the suitability of the AIDA algorithm in detecting multidimensional outlier subspaces. In particular, the best performances---marked in bold numbers---are always obtained using the variance score, with the best model using a randomized $\alpha$. Moreover, iForest and iNNE systematically return the lowest AUC (Area Under the Curve, see, e.g., \cite{james2021}), showing that they are not suitable for detecting outliers in multidimensional subspaces. As far as the iForest algorithm is concerned, this was expected due to \Cref{prop:hidden_prob} in \Cref{sec:app_proof}.

\begin{table}[ht]
  \ra{1.3}
  \caption{AUC obtained in the HiCs data sets with the different anomaly detection models. The variants of the AIDA algorithm are labelled according to the score function used---Expectation (E) or Variance (V)---and the choice of $\alpha$ in \Cref{eq:mean_expo,eq:var_expo} ($\alpha = 1$ (1) or random choice (R)).}
  \centering
  \resizebox{\columnwidth}{!}{
  \begin{threeparttable}
    \begin{tabular}{lcccccccc}\midrule\midrule 
        & AIDA (E1) &  AIDA (ER) & AIDA (V1) &  AIDA (VR) & iForest & iNNE & LOF  & AvgKNN\\ \midrule
        Hics 10.1 & \bm{$1.000_{\pm 0.001}$} & \bm{$1.000_{\pm 0.000}$} & \bm{$1.000_{\pm 0.000}$} & \bm{$1.000_{\pm 0.000}$} & $0.951_{\pm 0.007}$ & $0.901_{\pm 0.014}$ &  $0.993$  & $0.998$\\ 
        Hics 10.2 & $0.999_{\pm 0.001}$ & \bm{$1.000_{\pm 0.000}$} & \bm{$1.000_{\pm 0.000}$} & \bm{$1.000_{\pm 0.000}$} & $0.945_{\pm 0.010}$ & $0.889_{\pm 0.009}$ & $0.991$  & $0.995$ \\ 
        Hics 10.3 & $0.995_{\pm 0.002}$ & $\bm{0.996_{\pm 0.003}}$ & \bm{$0.998_{\pm 0.001}$} & \bm{$0.998_{\pm 0.001}$} & $0.859_{\pm 0.010}$ & $0.820_{\pm 0.017}$ & $0.975$  & $0.975$ \\ 
        Hics 20.1 & $0.874_{\pm 0.015}$ & $0.868_{\pm 0.026}$ & $\bm{0.910_{\pm 0.013}}$ & \bm{$0.920_{\pm 0.016}$} & $0.741_{\pm 0.018}$ & $0.745_{\pm 0.005}$ & $0.817$  & $0.836$  \\ 
        Hics 20.2 & $0.929_{\pm 0.011}$ & $0.928_{\pm 0.023}$ & $\bm{0.953_{\pm 0.008}}$ & \bm{$0.959_{\pm 0.006}$} & $0.777_{\pm 0.019}$ & $0.736_{\pm 0.011}$ & $0.863$  & $0.843$  \\
        Hics 20.3 & $0.947_{\pm 0.012}$ & $0.949_{\pm 0.015}$ & $\bm{0.970_{\pm 0.007}}$ & \bm{$0.974_{\pm 0.007}$} & $0.814_{\pm 0.017}$ & $0.762_{\pm 0.013}$ & $0.881$  & $0.869$  \\ 
        Hics 30.1 & $0.828_{\pm 0.029}$ & $0.825_{\pm 0.024}$ & $\bm{0.879_{\pm 0.016}}$ & \bm{$0.891_{\pm 0.015}$} & $0.722_{\pm 0.015}$ & $0.693_{\pm 0.008}$ & $0.731$  & $0.739$  \\ 
        Hics 30.2 & $0.852_{\pm 0.021}$ & $0.828_{\pm 0.032}$ & \bm{$0.893_{\pm 0.009}$} & \bm{$0.893_{\pm 0.011}$} & $0.678_{\pm 0.021}$ & $0.665_{\pm 0.010}$ & $0.733$  & $0.748$  \\
        Hics 30.3 & $0.860_{\pm 0.016}$ & $0.871_{\pm 0.011}$ & $\bm{0.911_{\pm 0.015}}$ & \bm{$0.923_{\pm 0.014}$} & $0.709_{\pm 0.013}$ & $0.685_{\pm 0.009}$ & $0.769$  & $0.763$ \\ 
        Hics 40.1 & $0.742_{\pm 0.022}$ & $0.750_{\pm 0.028}$ & $\bm{0.829_{\pm 0.015}}$ & \bm{$0.840_{\pm 0.016}$} & $0.645_{\pm 0.016}$ & $0.641_{\pm 0.010}$ & $0.726$  & $0.696$  \\ 
        Hics 40.2 & $0.768_{\pm 0.023}$ & $0.750_{\pm 0.026}$ &  $\bm{0.837_{\pm 0.014}}$ & \bm{$0.846_{\pm 0.009}$} & $0.608_{\pm 0.011}$ & $0.587_{\pm 0.013}$ & $0.685$  & $0.650$  \\ 
        Hics 40.3 & $0.757_{\pm 0.027}$ & $0.701_{\pm 0.022}$ & \bm{$0.824_{\pm 0.012}$} & $\bm{0.793_{\pm 0.015}}$ & $0.695_{\pm 0.016}$ & $0.680_{\pm 0.006}$ & $0.732$  & $0.718$  \\
        Hics 50.1 & $0.724_{\pm 0.025}$ & $0.723_{\pm 0.026}$ & $\bm{0.802_{\pm 0.009}}$ & \bm{$0.810_{\pm 0.021}$} & $0.611_{\pm 0.021}$ & $0.601_{\pm 0.007}$ & $0.679$  & $0.649$  \\
        Hics 50.2 & $0.716_{\pm 0.026}$ & $0.725_{\pm 0.021}$ & $\bm{0.802_{\pm 0.015}}$ & \bm{$0.815_{\pm 0.011}$} & $0.662_{\pm 0.011}$ & $0.651_{\pm 0.005}$ & $0.737$  & $0.708$  \\
        Hics 50.3 & $0.718_{\pm 0.020}$ & $0.716_{\pm 0.021}$ & $\bm{0.778_{\pm 0.011}}$ & \bm{$0.794_{\pm 0.017}$} & $0.630_{\pm 0.016}$ & $0.626_{\pm 0.008}$ & $0.670$  & $0.664$  \\
        Hics 75.1 & $0.616_{\pm 0.024}$ & $0.600_{\pm 0.016}$ & $\bm{0.672_{\pm 0.014}}$ & \bm{$0.675_{\pm 0.011}$} & $0.582_{\pm 0.011}$ & $0.582_{\pm 0.007}$ & $0.620$  & $0.604$  \\
        Hics 75.2 & $0.633_{\pm 0.017}$ & $0.634_{\pm 0.022}$ & $\bm{0.694_{\pm 0.015}}$ & \bm{$0.705_{\pm 0.015}$} & $0.586_{\pm 0.008}$ & $0.578_{\pm 0.005}$ & $0.631$  & $0.600$  \\
        Hics 75.3 & $0.608_{\pm 0.026}$ & $0.601_{\pm 0.029}$ & $\bm{0.673_{\pm 0.014}}$ & \bm{$0.685_{\pm 0.022}$} & $0.595_{\pm 0.016}$ & $0.586_{\pm 0.005}$ & $0.641$  & $0.615$  \\
        Hics 100.1 & $0.604_{\pm 0.022}$ & $0.603_{\pm 0.019}$ & $\bm{0.649_{\pm 0.015}}$ & \bm{$0.663_{\pm 0.015}$} & $0.578_{\pm 0.016}$ & $0.578_{\pm 0.004}$ & $0.622$  & $0.599$  \\
        Hics 100.2 & $0.575_{\pm 0.021}$ & $0.573_{\pm 0.029}$ & $\bm{0.615_{\pm 0.008}}$ & \bm{$0.632_{\pm 0.014}$} & $0.558_{\pm 0.014}$ & $0.574_{\pm 0.006}$ & $0.587$  & $0.590$  \\
        Hics 100.3 & $0.612_{\pm 0.015}$ & $0.606_{\pm 0.018}$ & $\bm{0.663_{\pm 0.014}}$ & \bm{$0.676_{\pm 0.008}$} & $0.573_{\pm 0.017}$ & $0.574_{\pm 0.005}$ & $0.613$  & $0.598$  \\
        \midrule\midrule
    \end{tabular}
  \end{threeparttable}
  }
  \label{tab:auc_hics}
\end{table}

On the other hand, no algorithm is able to provide highly satisfactory results for the most difficult cases, mainly those with $75$ and $100$ features. We proved in \Cref{prop:hidden_prob} that iForest has a very low probability of finding the relevant feature subspaces when the number of irrelevant features is large, thus this was an expected result. For the distance-based methods (including AIDA), the curse of dimensionality ``dilutes'' the contribution of each feature to the distance metric, resulting in a loss of discrimination between outliers and inliers \cite{aggarwal2001}. In those challenging cases, coupling the anomaly detection algorithm with an efficient subspace search method seems to be a viable choice for achieving accurate results \cite{keller2012}. Another alternative is to explore the effect of different distance metrics \cite{aggarwal2001}, since some of them have been shown to produce very diverse results \cite{zimek2014}.

\subsubsection{Anomaly explanation in the HiCs data sets}

We now present the explanation results of the TIX algorithm. Since we know beforehand which features define the outliers, we can measure the performance of TIX by the size of the minimal feature subspace that contains all the relevant features. For example, if the outlier is characterized by a combination of three features, a minimal feature subspace of size three means that the algorithm has successfully found the important features without adding any noise. If, on the other hand, the minimal subspace contains five features, two irrelevant features need to be checked before finding the relevant subspace.

In order to measure the impact of the refinement step, we test several values of the refinement rate $\beta$ in \Cref{alg:refine} under similar computational constraints. That is, since a smaller $\beta$ leads to more iterations in \Cref{alg:refine}, we modify the number of iterations in \Cref{alg:tix} so that each version takes approximately the same amount of time. Otherwise, it could be argued that \Cref{alg:refine} leads to better results due to the extra computations. Specifically, we set $k_{min} = 10$ and test $\beta \in \{1.5,2,10\}$, with $M = 20$ for the case $\beta = 10$, and adapting $M$ to the other values of $\beta$ with a grid search until the computational times are similar.

The results can be seen in \Cref{tab:tix_hics}, where $r$ denotes the size of the outlier feature subspace, and entries with ``--'' indicate that the data set does not contain outliers in feature subspaces of that dimensionality. Each entry contains the average minimal subspace over all the outliers characterized by a particular value of $r$, so that an entry value equal to $r$ indicates a perfect score.

\begin{table}[ht]
  \ra{1.3}
  \caption{Size of the average minimal subspace returned by TIX on several HiCs data sets and different outlier subspaces.}
  \centering
  \resizebox{\columnwidth}{!}{
  \begin{threeparttable}
    \begin{tabular}{lrrrrrrrrrrrr}\midrule\midrule 
       & \multicolumn{4}{c}{$\beta = 1.5$} & \multicolumn{4}{c}{$\beta = 2$} & \multicolumn{4}{c}{$\beta = 10$} \\ \cmidrule(lr){2-5}\cmidrule(lr){6-9}\cmidrule(lr){10-13}
       & $r=2$ &  $r=3$ & $r=4$ &  $r=5$ & $r=2$ &  $r=3$ & $r=4$ &  $r=5$ & $r=2$ &  $r=3$ & $r=4$ &  $r=5$ \\ \midrule
        Hics 10.1 & $2.0_{\pm 0.0}$ & -- & $4.0_{\pm 0.0}$ & -- & $2.0_{\pm 0.0}$ & -- & $4.0_{\pm 0.0}$ & -- & $2.0_{\pm 0.0}$ & -- & $4.0_{\pm 0.0}$ & -- \\ 
        Hics 20.1 & $2.0_{\pm 0.0}$ & $3.0_{\pm 0.0}$ & -- & $6.9_{\pm 0.4}$ & $2.0_{\pm 0.0}$ & $3.0_{\pm 0.0}$ & -- & $7.1_{\pm 0.3}$ & $2.0_{\pm 0.0}$ & $3.0_{\pm 0.0}$ & -- & $6.9_{\pm 0.3}$  \\ 
        Hics 30.1 & $2.0_{\pm 0.0}$ & $3.0_{\pm 0.0}$ & $4.5_{\pm 0.3}$ & $11.4_{\pm 0.9}$ & $2.0_{\pm 0.0}$ & $3.0_{\pm 0.0}$ & $4.8_{\pm 0.4}$ & $11.7_{\pm 1.3}$ & $2.0_{\pm 0.0}$ & $3.0_{\pm 0.0}$ & $6.0_{\pm 0.4}$ & $12.3_{\pm 1.6}$  \\ 
        Hics 40.1 & $2.0_{\pm 0.0}$ & $3.0_{\pm 0.0}$ & $6.2_{\pm 1.3}$ & $14.0_{\pm 1.1}$ & $2.0_{\pm 0.0}$ & $3.0_{\pm 0.0}$ & $7.4_{\pm 0.8}$ & $14.6_{\pm 0.7}$ & $2.0_{\pm 0.0}$ & $3.0_{\pm 0.0}$ & $8.0_{\pm 0.7}$ & $15.0_{\pm 1.0}$ \\ 
        Hics 50.1 & $2.0_{\pm 0.0}$ & $3.0_{\pm 0.0}$ & $8.5_{\pm 0.8}$ & $17.6_{\pm 1.6}$ & $2.0_{\pm 0.0}$ & $3.0_{\pm 0.0}$ & $11.5_{\pm 1.3}$ & $20.5_{\pm 1.7}$ & $2.0_{\pm 0.0}$ & $3.1_{\pm 0.2}$ & $14.3_{\pm 1.3}$ & $20.9_{\pm 0.7}$ \\ 
        Hics 75.1 & $2.0_{\pm 0.0}$ & $3.0_{\pm 0.0}$ & $18.1_{\pm 2.3}$ & $35.2_{\pm 2.9}$ & $2.0_{\pm 0.0}$ & $3.3_{\pm 0.5}$ & $19.7_{\pm 1.6}$ & $35.7_{\pm 2.6}$ & $2.0_{\pm 0.0}$ & $5.2_{\pm 1.2}$ & $26.4_{\pm 1.6}$ & $37.5_{\pm 1.5}$ \\
        Hics 100.1 & $2.0_{\pm 0.0}$ & $7.6_{\pm 2.6}$ & $41.0_{\pm 4.4}$ & $54.1_{\pm 3.6}$ & $2.0_{\pm 0.0}$ & $8.6_{\pm 2.5}$ & $41.1_{\pm 4.1}$ & $54.8_{\pm 2.0}$ & $2.0_{\pm 0.0}$ & $15.2_{\pm 1.6}$ & $43.8_{\pm 3.5}$ & $54.1_{\pm 3.5}$ \\
        \midrule\midrule
    \end{tabular}
  \end{threeparttable}
  }
  \label{tab:tix_hics}
\end{table}

Looking at \Cref{tab:tix_hics}, it is clear that TIX is able to find outlier subspaces of dimension $r = 2$ with no extra noise in all scenarios. The case $r=3$ is also perfectly recovered for any number of features $d\leq 50$, with minimal noise in higher dimensions if the refinement step of \Cref{alg:refine} is used. Outlier subspaces with $r= 4$ can be recovered with a few noisy features if $d\leq 50$, but on the other cases the amount of noise is considerably large. Finally, the case $r=5$ seems particularly challenging, and the results are only satisfactory for $d \leq 30$. This is because, for each subspace considered in \Cref{alg:tix}, TIX only computes the outlier score of the point of interest. Hence, it is possible that a combination of five irrelevant features produces a better outlier score in absolute value, and a comparison with the scores of other observations is needed to discern the actual outlier subspace.

On the other hand, the refinement step overall yields better results as we decrease $\beta$, even under similar computational constraints. Therefore, we suggest to decrease the value of $\beta$ instead of increasing $M$ in \Cref{alg:tix}.

\subsection{Empirical data}

Finally, we test the AIDA and TIX algorithms on some empirical data sets commonly used in the field of anomaly detection. The data sets are described in \Cref{tab:data_uci} in terms of the number of observations $n$, number of features $d$ and percentage of outliers. In some of the data sets, some preprocessing was required to define the outlier class. We refer to \cite{liu2012} for the definition of the outlier class in the Annthyroid, Arrhythmia, Breastw, ForestCover, Http, Ionosphere, Mammography, Pima, Satellite, Shuttle and Smtp data sets; and to \cite{aggarwal2015} for the Glass, Musk and Satimage-2 data sets. We refer to the same articles for information on how to obtain the data.

For consistency, we consider the same algorithms and configurations as we did at the beginning of \Cref{sec:hics}.

\begin{table}[ht]
  \caption{List of the empirical data sets we use in the analysis, together with some basic information about the number of observations, the dimensionality and the percentage of outliers.}
  \centering
  \begin{threeparttable}
    \begin{tabular}{lrrr}\midrule\midrule 
       & $n$ &  $d$ & \% outliers \\ \midrule
        Annthyroid & $6832$ & $6$ & $7$\\ 
        Arrhythmia & $452$ & $274$ & $15$\\ 
        Breastw & $683$ & $9$ & $35$\\
        ForestCover & $286048$ & $10$ & $0.9$\\
        Glass & $214$ & $9$ & $4.2$\\ 
        Http & $567497$ & $3$ & $0.4$\\ 
        Ionosphere & $351$ & $32$ & $36$\\ 
        Mammography & $11183$ & $6$ & $2$\\ 
        Musk & $3062$ & $166$ & $3.2$\\ 
        Pima & $768$ & $8$ & $35$\\ 
        Satellite & $6435$ & $36$ & $32$\\
        Satimage-2 & $5803$ & $36$ & $1.2$\\ 
        Shuttle & $49097$ & $9$ & $7$\\
        Smtp & $95156$ & $3$ & $0.03$\\ 
        \midrule\midrule
    \end{tabular}
  \end{threeparttable}
  \label{tab:data_uci}
\end{table}

\subsubsection{Anomaly detection in the empirical data sets}

The results are displayed in \Cref{tab:auc_uci}, where we present the performance of each model in terms of the AUC. The best two scores are highlighted in bold numbers. Similarly to the results of \Cref{tab:auc_hics}, the variance score function tends to perform better than the expectation score function, except in a few cases $(4/14)$. 

On the other hand, the randomized choice of $\alpha$ has a smaller impact in \Cref{tab:auc_uci} compared to \Cref{tab:auc_hics}. There are two possible explanations for these differences. One of them is that we are randomizing $\alpha$ in an interval with opposing effects: $\alpha >1$ enlarges the intervals in \Cref{eq:var_expo}, while $\alpha < 1$ shrinks them. Hence the average corresponds to a low-risk/low-reward ensemble that dilutes these effects. Using ensembles with only $\alpha < 1$ or only $\alpha > 1$ could be an alternative to explore the benefits of \Cref{eq:var_expo} over \Cref{eq:var_uni} in that case. 

The other possible explanation is that most of the detected outliers in \Cref{tab:auc_uci} are strong outliers, since slight variations in the value of $\alpha$ do not have a large impact on the score of strong outliers. This second explanation seems more plausible in this case, considering that the HiCs data sets do not have many strong outliers.

\begin{table}[ht]
  \ra{1.3}
  \caption{AUC obtained in the empirical data sets with the different anomaly detection models. The variants of the AIDA algorithm are labelled according to the score function used---Expectation (E) or Variance (V)---and the choice of $\alpha$ in \Cref{eq:mean_expo,eq:var_expo} ($\alpha = 1$ (1) or random choice (R)).}
  \centering
  \resizebox{\columnwidth}{!}{
  \begin{threeparttable}
    \begin{tabular}{lcccccccc}\midrule\midrule
       & AIDA (E1) &  AIDA (ER) & AIDA (V1) &  AIDA (VR) & iForest & iNNE & LOF & AvgKNN\\ \midrule
        Annthyroid & $\bm{0.823_{\pm 0.011}}$ & $\bm{0.817_{\pm 0.013}}$ & $0.809_{\pm 0.009}$ & $0.814_{\pm 0.008}$ & $0.809_{\pm 0.012}$ & $0.699_{\pm 0.010}$ & $0.744$ & $0.807$ \\ 
        Arrhythmia & $0.784_{\pm 0.008}$ & $0.784_{\pm 0.008}$ & $0.798_{\pm 0.001}$ & $\bm{0.800_{\pm 0.002}}$ & $\bm{0.804_{\pm 0.013}}$ & $0.753_{\pm 0.007}$ & $0.796$ & $0.776$  \\ 
        Breastw & $0.980_{\pm 0.001}$ & $0.981_{\pm 0.002}$ & $0.981_{\pm 0.002}$ & $0.982_{\pm 0.001}$ & $\bm{0.986_{\pm 0.002}}$ & $0.724_{\pm 0.029}$ & $0.384$ & $\bm{0.986}$  \\ 
        ForestCover & $0.857_{\pm 0.015}$ & $0.854_{\pm 0.012}$ & $0.861_{\pm 0.016}$ & $0.865_{\pm 0.011}$ & $\bm{0.876_{\pm 0.019}}$ & $\bm{0.955_{\pm 0.009}}$ & $0.536$ & $0.790$  \\ 
        Glass & $0.885_{\pm 0.005}$ & $0.886_{\pm 0.006}$ & $\bm{0.894_{\pm 0.006}}$ & $\bm{0.894_{\pm 0.004}}$ & $0.811_{\pm 0.006}$ & $0.872_{\pm 0.015}$ & $0.830$ & $\bm{0.903}$  \\ 
        Http  & $0.994_{\pm 0.000}$ & $0.994_{\pm 0.001}$ & $\bm{0.998_{\pm 0.001}}$ & $0.996_{\pm 0.000}$ & $\bm{1.000_{\pm 0.000}}$ & $\bm{0.998_{\pm 0.002}}$ & $0.352$ & $0.133$  \\
        Ionosphere & $0.912_{\pm 0.003}$ & $0.914_{\pm 0.003}$ & $0.921_{\pm 0.002}$ & $\bm{0.923_{\pm 0.002}}$ & $0.860_{\pm 0.005}$ & $0.901_{\pm 0.008}$ & $0.840$ & $\bm{0.934}$  \\ 
        Mammography & $\bm{0.858_{\pm 0.006}}$ & $0.857_{\pm 0.008}$ & $0.856_{\pm 0.007}$ & $0.852_{\pm 0.008}$ & $\bm{0.859_{\pm 0.008}}$ & $0.825_{\pm 0.011}$ & $0.719$ & $0.849$  \\ 
        Musk & $0.978_{\pm 0.013}$ & $0.994_{\pm 0.005}$ & $\bm{1.000_{\pm 0.000}}$ & $\bm{1.000_{\pm 0.000}}$ & $0.999_{\pm 0.001}$ & $\bm{1.000_{\pm 0.000}}$ & $0.453$ & $0.826$  \\
        Pima & $0.702_{\pm 0.006}$ & $0.699_{\pm 0.006}$ & $\bm{0.714_{\pm 0.004}}$ & $0.713_{\pm 0.006}$ & $0.675_{\pm 0.013}$ & $0.684_{\pm 0.006}$ & $0.621$ & $\bm{0.714}$  \\ 
        Satellite & $0.717_{\pm 0.004}$ & $0.721_{\pm 0.004}$ & $\bm{0.746_{\pm 0.004}}$ & $\bm{0.751_{\pm 0.004}}$ & $0.704_{\pm 0.015}$ & $0.739_{\pm 0.016}$ & $0.553$ & $0.689$  \\
        Satimage-2 & $0.997_{\pm 0.001}$ & $0.998_{\pm 0.001}$ & $\bm{0.999_{\pm 0.000}}$ & $\bm{0.999_{\pm 0.001}}$ & $0.993_{\pm 0.001}$ & $0.997_{\pm 0.001}$ & $0.537$ & $0.966$  \\
        Shuttle & $0.967_{\pm 0.004}$ & $0.968_{\pm 0.006}$ & $0.983_{\pm 0.001}$ & $\bm{0.985_{\pm 0.001}}$ & $\bm{0.994_{\pm 0.001}}$ & $\bm{0.985_{\pm 0.004}}$ & $0.539$ & $0.687$  \\ 
        Smtp  & $0.904_{\pm 0.001}$ & $\bm{0.907_{\pm 0.002}}$ & $0.899_{\pm 0.002}$ & $0.898_{\pm 0.002}$ & $0.879_{\pm 0.008}$ & $\bm{0.909_{\pm 0.009}}$ & $0.441$ & $0.906$  \\ 
        \midrule\midrule
    \end{tabular}
  \end{threeparttable}
  }
  \label{tab:auc_uci}
\end{table}

Moreover, from \Cref{tab:auc_uci} it is clear that AIDA performs favourably compared to other state-of-the-art methods. In particular, algorithms whose performance is highly dependent on the choice of certain parameters (i.e., iNNE, LOF and AvgKNN) can perform very well on some data sets but poorly on others. In contrast, all variations of AIDA are very stable, sometimes yielding the best or second-best results (12/14). Furthermore, in the data sets where it does not give the highest AUC, the difference is usually very small (around 0.01 AUC), except in one case (ForestCover).

\subsubsection{Anomaly explanation in the empirical data sets}

Even though these are labelled data sets, and we know beforehand which are the potential outliers, we do not know which features caused them. Hence, we cannot do the same comparison as we did in \Cref{sec:hics}. As an alternative, we choose the data set with the highest dimensionality in \Cref{tab:data_uci}---i.e. the Arrhythmia data set---and analyze some of the most anomalous points classified by AIDA (VR), i.e, the AIDA model with variance score and random $\alpha$. Doing so will allow us to verify whether the labelled outliers are actually the only points that look different from the rest of the data set, or if this does not hold for some of them.

Specifically, a data set should verify two conditions to qualify as a good benchmark for anomaly detection: the labelled points must be different in some way from the rest of the data set, while unlabelled points should look ``normal'' (or not unusual) in any feature subspace. Empirical data sets often show anomalies that are caused by a particular mechanism, so that outliers generated by other causes end up being classified as inliers. However, anomaly detection methods do not make distinctions with respect to the types of outliers, but only consider whether a point is anomalous or not. Thus, empirical data sets may give the impression that a certain algorithm is not performing well, because the data set targets a particular type of outlier, while most algorithm do not. We illustrate this problem using anomaly explanation and show that some points are indeed anomalous even though they are not labelled as outliers. The setting for the TIX algorithm is $M=1$, $k_{min} = 10$ and $\beta = 1.5$ (see \Cref{alg:tix,alg:refine}), and the results were repeated 10 times for consistency.

In \Cref{fig:tix_arrhythmia}, we present the explanation results of the first and fourth most outlying points detected by AIDA (VR)---in fact, these points were signalled as outliers by all methods considered in this paper. We have chosen these two points, because the first is an actual outlier, while the second is the most anomalous observation, according to AIDA, that was labelled as in inlier in the original data set. 

The DPPs for each of the two points using the 20 most relevant features are shown in the top row of \Cref{fig:tix_arrhythmia}, which immediately illustrates that these points are easily isolated with respect to their most relevant features. This can be visualized in the lower row of \Cref{fig:tix_arrhythmia}, where we present 2D plots of the two most relevant features for each observation. The features are numbered in the order they appear in the original data set (from 0 to 273), once the features with missing values have been removed.

It is remarkable that the shape of the 2D plots is very similar in both cases, with the points of interest lying on the opposite side of the majority of the data set, contained in the origin $(0,0)$ in both plots. Moreover, in the lower right plot of \Cref{fig:tix_arrhythmia} we observe another labelled outlier that is also easy to isolate from most of the inliers in that feature combination. 

Interestingly, there is another observation close to this labelled outlier that was classified as an inlier. We note that both points were also reported as outliers by all the algorithms considered here. Therefore, we conclude that, while this data set contains labelled outliers that are indeed anomalous, there exist also labelled inliers with similar outlying properties. 

The consequence is that the performance reported by the anomaly detection algorithms could be low, not because they do not detect the anomalous points, but rather because they do not detect outliers of a particular kind. Explanation methods, such as the TIX algorithm proposed here, can help determining whether anomalous points are caused by the relevant mechanisms of a particular application by analyzing the most relevant features that explain each outlier. 

\begin{figure}[ht]
    \centering
    \begin{subfigure}{0.45\textwidth}
    \centering
    \includegraphics[width=\textwidth]{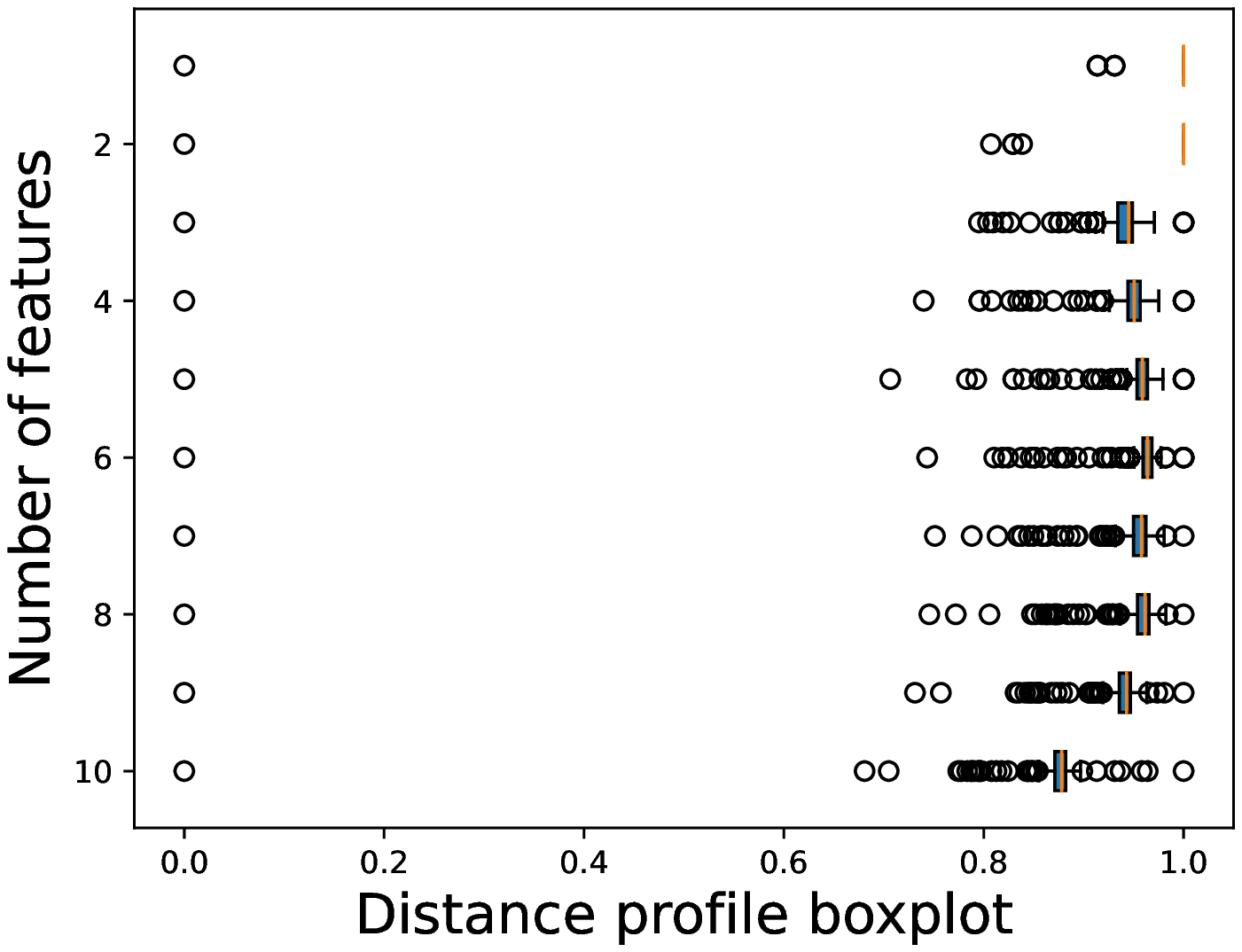}
    \caption{DPP of the labelled outlier.}
    \end{subfigure}    
    \begin{subfigure}{0.45\textwidth}
    \centering
    \includegraphics[width=\textwidth]{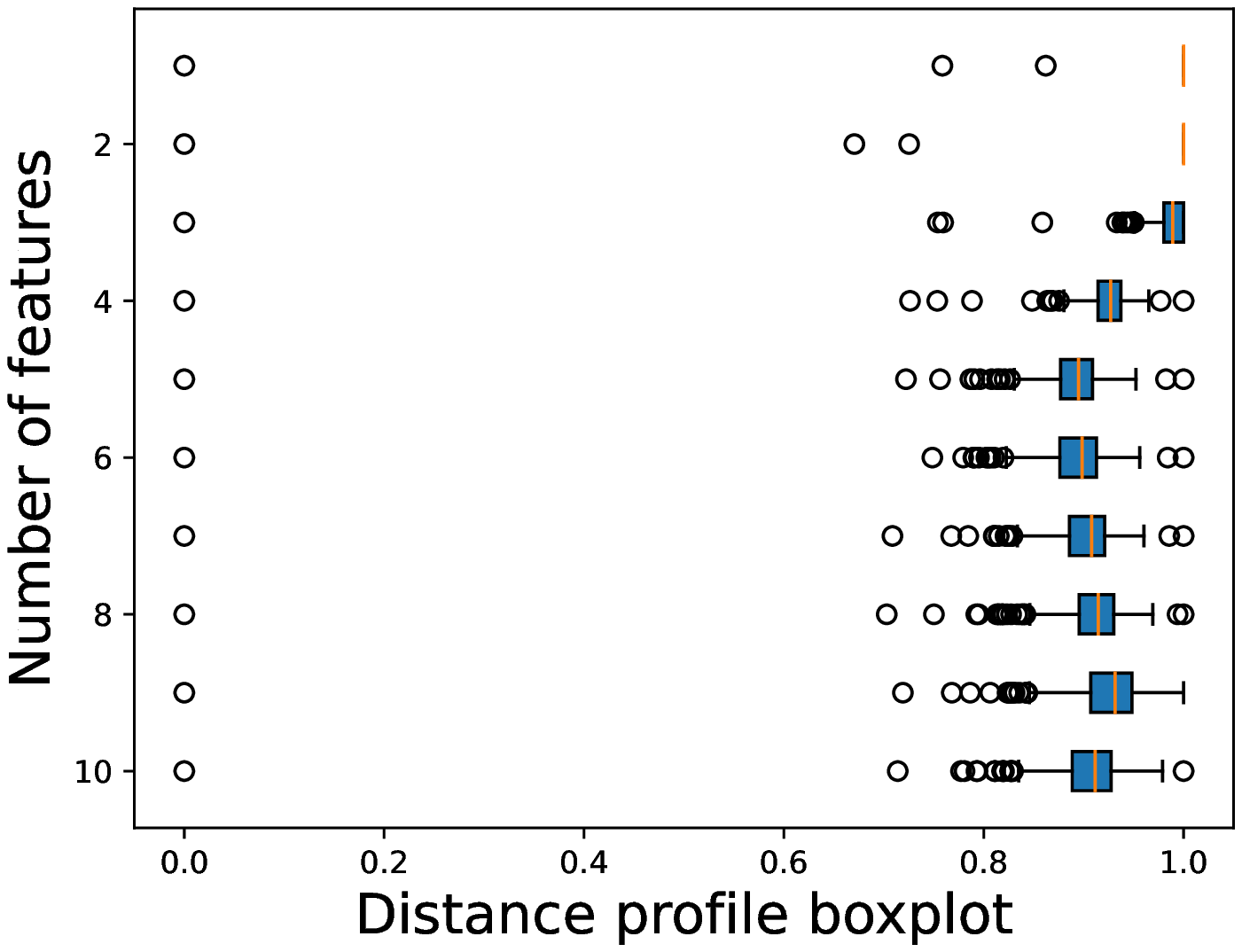}
    \caption{DPP of the labelled inlier.}
    \label{fig:arrhythmia_inlier_dpp}
    \end{subfigure}
    \medskip
    \begin{subfigure}{0.45\textwidth}
    \centering
    \includegraphics[width=\textwidth]{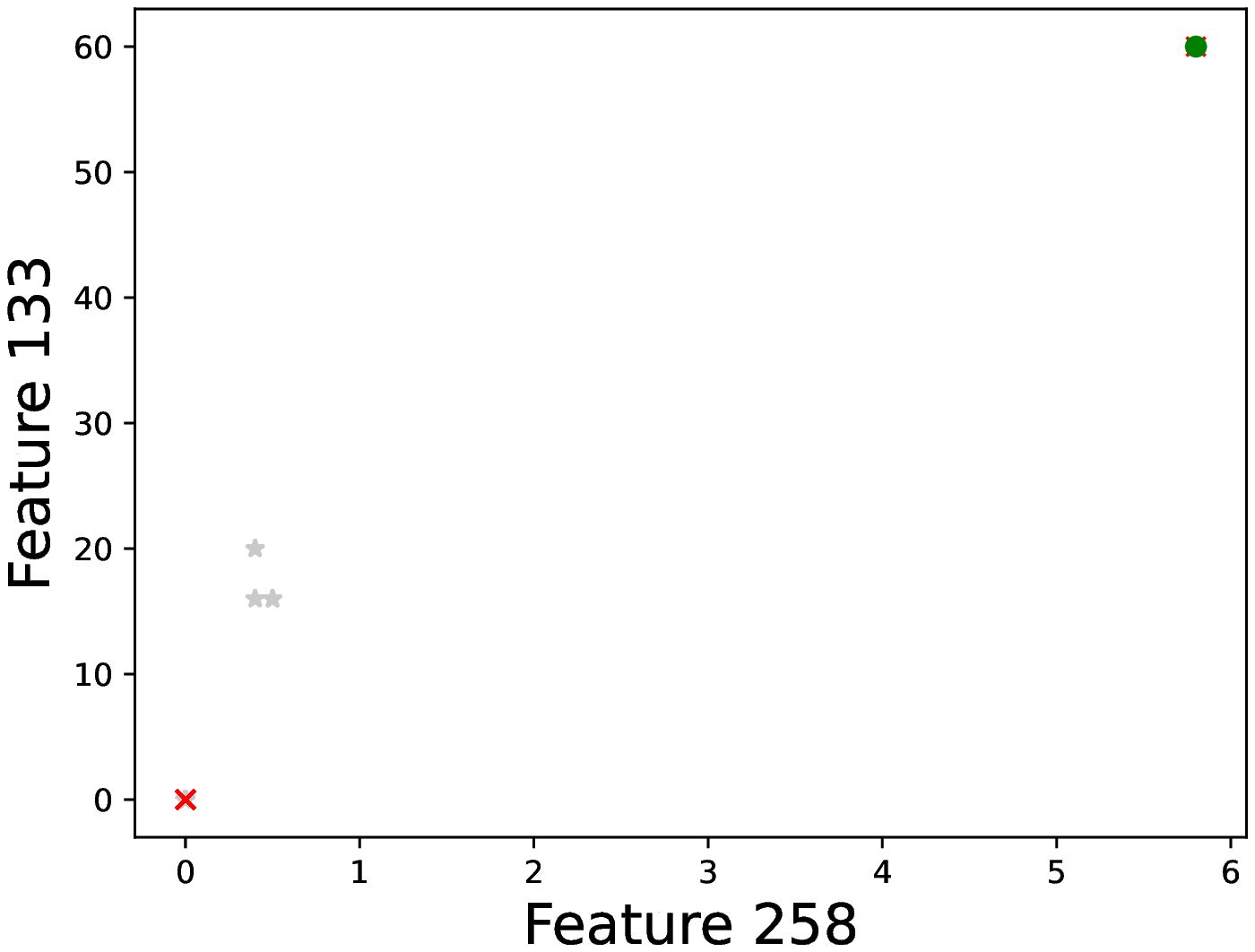}
    \caption{2D plot of the outlier.}
    \end{subfigure}    
    \begin{subfigure}{0.45\textwidth}
    \centering
    \includegraphics[width=\textwidth]{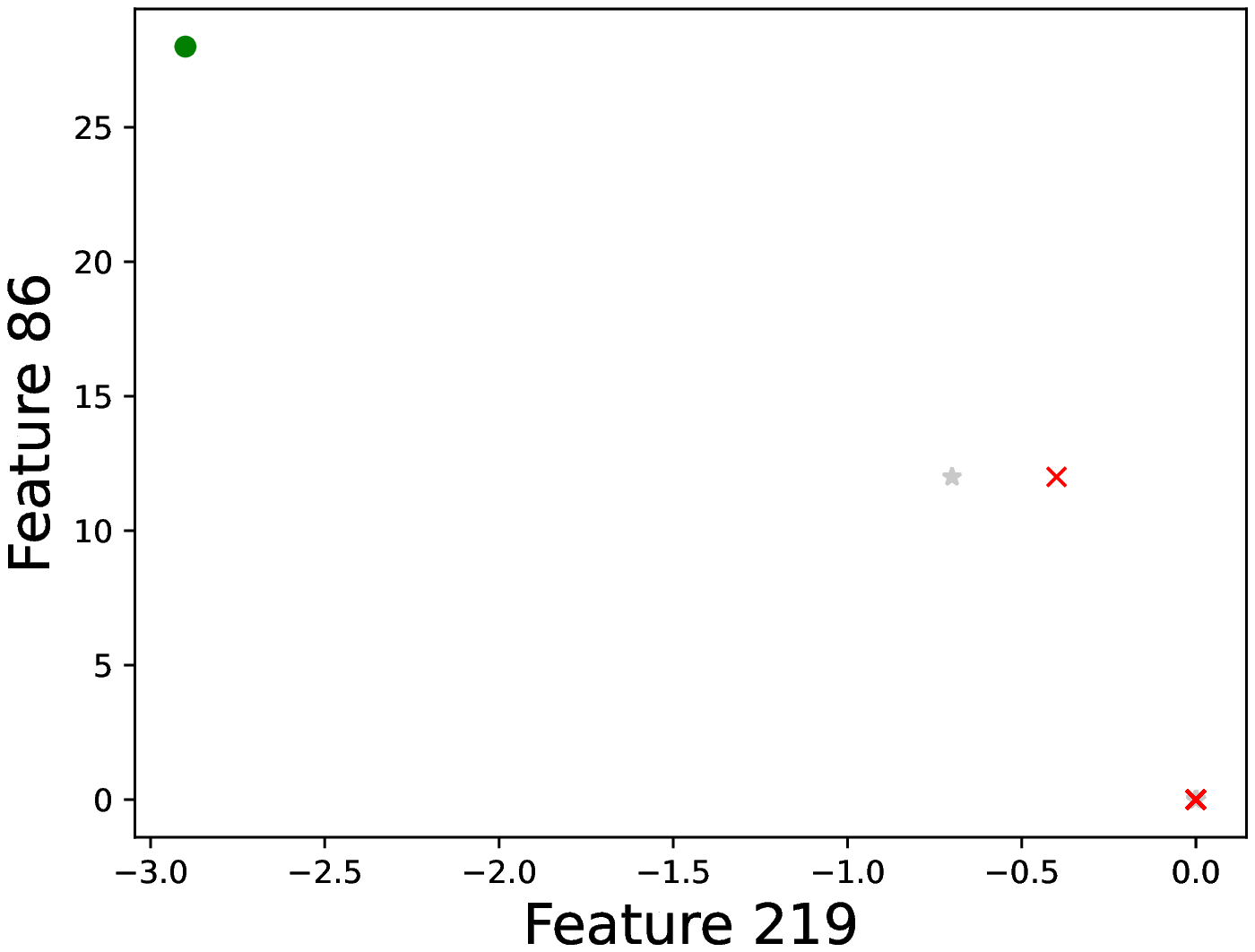}
    \caption{2D plot of the inlier.}
    \label{fig:arrhythmia_inlier_plot}
    \end{subfigure}

    \caption{Analysis of two observations of the Arrhythmia data set classified as outliers by AIDA. The top row consists of the DPPs of an actual outlier (left) and a point originally classified as inlier (right). The lower row contains 2D plots of the two most relevant features for each point, marked as green dots. Inliers are marked with gray starts and outliers with red crosses.}
    \label{fig:tix_arrhythmia}
\end{figure}

\section{Conclusions}\label{sec:conclusions}

In this paper, we have proposed two new algorithms, under the acronyms AIDA (Analytic Isolation and Distance-based Anomaly) and TIX (Tempered Isolation-based Explanation).

The anomaly detection algorithm AIDA has been shown to generate similar or superior performances when compared to other state-of-the-art algorithms, especially in the case of multidimensional outlier hidden subspaces. This is partially due to the definition of outlier employed by AIDA, which inclines towards points that can be easily isolated, regardless of whether those points belong to extreme or interior values, in contrast to the artificial regions created by iForest. We have also proved several results concerning isolation methods, such as analytical formulas for the moment generating function and the first two cumulants of the number of random splits, and the convergence rate of the probability that the iForest algorithm \cite{liu2012} finds specific feature subspaces of a given dimensionality.

In discussing the TIX algorithm, we have shown that it provides accurate explanations for outliers hiding in two- and three-dimensional subspaces, even when the number of irrelevant features is extremely large. 

Moreover, the DPP (distance profile plot) has been proposed as a visualization tool, which, in combination with the traditional 2D plots, can immediately find subspaces where the outliers can be isolated. This has been illustrated using empirical data sets with hundreds of features, and it has been shown that the explanations can be useful to filter anomalous points generated by different mechanisms.

\appendix

\bibliography{references.bib}

\begin{thebibliography}{31}
\providecommand{\natexlab}[1]{#1}
\providecommand{\url}[1]{\texttt{#1}}
\expandafter\ifx\csname urlstyle\endcsname\relax
  \providecommand{\doi}[1]{doi: #1}\else
  \providecommand{\doi}{doi: \begingroup \urlstyle{rm}\Url}\fi

\bibitem[Aarts and van Laarhoven(1985)]{aarts1985}
E.~Aarts and P.~van Laarhoven.
\newblock {Statistical Cooling : A General Approach to Combinatorial
  Optimization Problems}.
\newblock \emph{Philips Journal of Research}, 40\penalty0 (4):\penalty0
  193--226, 1985.

\bibitem[Aggarwal(2017)]{aggarwal2017}
C.~C. Aggarwal.
\newblock \emph{{Outlier analysis}}.
\newblock Springer, Cham, Switzerland, second edition, 2017.
\newblock ISBN 9783319475783.
\newblock \doi{10.1007/978-3-319-47578-3}.

\bibitem[Aggarwal and Sathe(2015)]{aggarwal2015}
C.~C. Aggarwal and S.~Sathe.
\newblock {Theoretical Foundations and Algorithms for Outlier Ensembles}.
\newblock \emph{SIGKDD Explor. Newsl.}, 17\penalty0 (1):\penalty0 24–47, sep
  2015.
\newblock \doi{10.1145/2830544.2830549}.

\bibitem[Aggarwal et~al.(2001)Aggarwal, Hinneburg, and Keim]{aggarwal2001}
C.~C. Aggarwal, A.~Hinneburg, and D.~A. Keim.
\newblock {On the Surprising Behavior of Distance Metrics in High Dimensional
  Space}.
\newblock In J.~Van~de Bussche and V.~Vianu, editors, \emph{{Lecture Note in
  Computer Science}}, volume 1973. Springer, Berlin, Heidelberg, 2001.
\newblock \doi{10.1007/3-540-44503-X_27}.

\bibitem[Agresti(2013)]{Agresti2013}
A.~Agresti.
\newblock \emph{{Categorical Data Analysis}}.
\newblock Wiley Series in Probability and Statistics. Wiley, 2013.
\newblock ISBN 9781118710944.
\newblock URL \url{https://books.google.ch/books?id=6PHHE1Cr44AC}.

\bibitem[Angiulli and Pizzuti(2002)]{angiulli2002}
F.~Angiulli and C.~Pizzuti.
\newblock {Fast Outlier Detection in High Dimensional Spaces}.
\newblock In T.~Elomaa, H.~Mannila, and H.~Toivonen, editors, \emph{Principles
  of Data Mining and Knowledge Discovery}, pages 15--27, Berlin, Heidelberg,
  2002. Springer Berlin Heidelberg.
\newblock \doi{10.1007/3-540-45681-3_2}.

\bibitem[Bandaragoda et~al.(2014)Bandaragoda, Ting, Albrecht, Liu, and
  Wells]{bandaragoda2014}
T.~R. Bandaragoda, K.~M. Ting, D.~Albrecht, F.~T. Liu, and J.~R. Wells.
\newblock {Efficient Anomaly Detection by Isolation Using Nearest Neighbour
  Ensemble}.
\newblock In \emph{2014 IEEE International Conference on Data Mining Workshop},
  pages 698--705, 2014.
\newblock \doi{10.1109/ICDMW.2014.70}.

\bibitem[Bandaragoda et~al.(2018)Bandaragoda, Ting, Albrecht, Liu, Zhu, and
  Wells]{bandaragoda2018}
T.~R. Bandaragoda, K.~M. Ting, D.~Albrecht, F.~T. Liu, Y.~Zhu, and J.~R. Wells.
\newblock {Isolation-based anomaly detection using nearest-neighbor ensembles}.
\newblock \emph{Computational Intelligence}, 34\penalty0 (4):\penalty0
  968--998, 2018.
\newblock \doi{https://doi.org/10.1111/coin.12156}.

\bibitem[Boriah et~al.(2008)Boriah, Chandola, and Kumar]{boriah2008}
S.~Boriah, V.~Chandola, and V.~Kumar.
\newblock {Similarity Measures for Categorical Data: A Comparative Evaluation}.
\newblock In \emph{Proceedings of the 2008 SIAM International Conference on
  Data Mining (SDM)}, pages 243--254, 2008.
\newblock \doi{10.1137/1.9781611972788.22}.

\bibitem[Breunig et~al.(2000)Breunig, Kriegel, Ng, and Sander]{breunig2000}
M.~M. Breunig, H.-P. Kriegel, R.~T. Ng, and J.~Sander.
\newblock Lof: Identifying density-based local outliers.
\newblock \emph{SIGMOD Rec.}, 29\penalty0 (2):\penalty0 93–104, may 2000.
\newblock \doi{10.1145/335191.335388}.

\bibitem[Dang et~al.(2014)Dang, Assent, Ng, Zimek, and Schubert]{dang2014}
X.~H. Dang, I.~Assent, R.~T. Ng, A.~Zimek, and E.~Schubert.
\newblock {Discriminative Features for Identifying and Interpreting Outliers}.
\newblock In \emph{2014 IEEE 30th International Conference on Data
  Engineering}, pages 88--99, 2014.
\newblock \doi{10.1109/ICDE.2014.6816642}.

\bibitem[Dunn and Smyth(2018)]{Dunn2018}
P.~Dunn and G.~Smyth.
\newblock \emph{{Generalized Linear Models With Examples in R}}.
\newblock Springer Texts in Statistics. Springer New York, 2018.
\newblock ISBN 9781441901187.
\newblock \doi{10.1007/978-1-4419-0118-7}.

\bibitem[Gupta et~al.(2019)Gupta, Eswaran, Shah, Akoglu, and
  Faloutsos]{gupta2019}
N.~Gupta, D.~Eswaran, N.~Shah, L.~Akoglu, and C.~Faloutsos.
\newblock {Beyond Outlier Detection: LookOut for Pictorial Explanation}.
\newblock In M.~Berlingerio, F.~Bonchi, T.~G{\"a}rtner, N.~Hurley, and
  G.~Ifrim, editors, \emph{Machine Learning and Knowledge Discovery in
  Databases}, pages 122--138, Cham, 2019. Springer International Publishing.

\bibitem[Hariri et~al.(2021)Hariri, Kind, and Brunner]{hariri2021}
S.~Hariri, M.~Kind, and R.~J. Brunner.
\newblock {Extended Isolation Forest}.
\newblock \emph{IEEE Transactions on Knowledge \& Data Engineering},
  33\penalty0 (4):\penalty0 1479--1489, 2021.
\newblock \doi{10.1109/TKDE.2019.2947676}.

\bibitem[James et~al.(2021)James, Witten, Hastie, and Tibshirani]{james2021}
G.~James, D.~Witten, T.~Hastie, and R.~Tibshirani.
\newblock \emph{{An Introduction to Statistical Learning: with Applications in
  R}}.
\newblock Springer Texts in Statistics. Springer US, 2021.
\newblock ISBN 9781071614181.
\newblock \doi{10.1007/978-1-4614-7138-7}.

\bibitem[Karczmarek et~al.(2020)Karczmarek, Kiersztyn, Pedrycz, and
  Al]{karczmarek2020}
P.~Karczmarek, A.~Kiersztyn, W.~Pedrycz, and E.~Al.
\newblock {K-Means-based isolation forest}.
\newblock \emph{{Knowledge-Based Systems}}, 195, 2020.
\newblock \doi{10.1016/j.knosys.2020.105659}.

\bibitem[Keller et~al.(2012)Keller, Muller, and Bohm]{keller2012}
F.~Keller, E.~Muller, and K.~Bohm.
\newblock {HiCS: High Contrast Subspaces for Density-Based Outlier Ranking}.
\newblock \emph{{IEEE 28th International Conference on Data Engineering}},
  pages 1037--1048, 2012.
\newblock \doi{10.1109/ICDE.2012.88}.

\bibitem[Kirkpatrick et~al.(1983)Kirkpatrick, Gelatt, and
  Vecchi]{kirkpatrick1983}
S.~Kirkpatrick, C.~D. Gelatt, and M.~P. Vecchi.
\newblock {Optimization by Simulated Annealing}.
\newblock \emph{Science}, 220\penalty0 (4598):\penalty0 671--680, 1983.
\newblock \doi{10.1126/science.220.4598.671}.

\bibitem[Kotz(1992)]{kotzbreakthroughs}
S.~Kotz.
\newblock \emph{{Breakthroughs in Statistics}}, volume~1 of \emph{Springer
  series in statistics}.
\newblock Springer, 1992.
\newblock \doi{10.1007/978-1-4612-4380-9}.

\bibitem[Lazarevic and Kumar(2005)]{lazaveric2005}
A.~Lazarevic and V.~Kumar.
\newblock {Feature Bagging for Outlier Detection}.
\newblock In \emph{Proceedings of the Eleventh ACM SIGKDD International
  Conference on Knowledge Discovery in Data Mining}, KDD '05, page 157–166,
  New York, NY, USA, 2005. Association for Computing Machinery.
\newblock ISBN 159593135X.
\newblock \doi{10.1145/1081870.1081891}.

\bibitem[Liu et~al.(2012)Liu, Ting, and Zhou]{liu2012}
F.~T. Liu, K.~M. Ting, and Z.-H. Zhou.
\newblock {Isolation-Based Anomaly Detection}.
\newblock \emph{ACM Transactions on Knowledge Discovery from Data}, 6\penalty0
  (1):\penalty0 1--39, 2012.
\newblock \doi{10.1145/2133360.2133363}.

\bibitem[Mensi and Bicego(2021)]{mensi2021}
A.~Mensi and M.~Bicego.
\newblock {Enhanced anomaly scores for isolation forests}.
\newblock \emph{Pattern Recognition}, 120, 2021.
\newblock \doi{10.1016/j.patcog.2021.108115}.

\bibitem[Mokoena et~al.(2022)Mokoena, Celik, and Marivate]{mokoena2022}
T.~Mokoena, T.~Celik, and V.~Marivate.
\newblock {Why is this an anomaly? Explaining anomalies using sequential
  explanations}.
\newblock \emph{Pattern Recognition}, 121:\penalty0 108227, 2022.
\newblock \doi{10.1016/j.patcog.2021.108227}.

\bibitem[Ni and Zheng(2007)]{ni2007}
L.~Ni and H.-Y. Zheng.
\newblock {An Unsupervised Intrusion Detection Method Combined Clustering with
  Chaos Simulated Annealing}.
\newblock In \emph{2007 International Conference on Machine Learning and
  Cybernetics}, volume~6, pages 3217--3222, 2007.
\newblock \doi{10.1109/ICMLC.2007.4370702}.

\bibitem[Pang et~al.(2015)Pang, Ting, and Albrecht]{pang2015}
G.~Pang, K.~M. Ting, and D.~Albrecht.
\newblock {LeSiNN: Detecting Anomalies by Identifying Least Similar Nearest
  Neighbours}.
\newblock In \emph{2015 IEEE International Conference on Data Mining Workshop
  (ICDMW)}, pages 623--630, 2015.
\newblock \doi{10.1109/ICDMW.2015.62}.

\bibitem[Panjei et~al.(2022)Panjei, Gruenwald, Leal, Nguyen, and
  Silvia]{panjei2022}
E.~Panjei, L.~Gruenwald, E.~Leal, C.~Nguyen, and S.~Silvia.
\newblock {A survey on outlier explanations}.
\newblock \emph{The VLDB Journal}, 31\penalty0 (5):\penalty0 977–1008, 2022.
\newblock \doi{10.1007/s00778-021-00721-1}.

\bibitem[Philipp et~al.(2007)Philipp, Della-Marta, Jacobeit, Fereday, Jones,
  Moberg, and Wanner]{philipp2007}
A.~Philipp, P.~M. Della-Marta, J.~Jacobeit, D.~R. Fereday, P.~D. Jones,
  A.~Moberg, and H.~Wanner.
\newblock {Long-Term Variability of Daily North Atlantic–European Pressure
  Patterns since 1850 Classified by Simulated Annealing Clustering}.
\newblock \emph{Journal of Climate}, 20\penalty0 (16):\penalty0 4065 -- 4095,
  2007.
\newblock \doi{10.1175/JCLI4175.1}.

\bibitem[Ramaswamy et~al.(2000)Ramaswamy, Rastogi, and Shim]{ramaswamy2000}
S.~Ramaswamy, R.~Rastogi, and K.~Shim.
\newblock {Efficient Algorithms for Mining Outliers from Large Data Sets}.
\newblock \emph{SIGMOD Rec.}, 29\penalty0 (2):\penalty0 427–438, 2000.
\newblock \doi{10.1145/335191.335437}.

\bibitem[Tokovarov and Karczmarek(2022)]{tokovarov2022}
M.~Tokovarov and P.~Karczmarek.
\newblock {A Probabilistic Generalization of Isolation Forest}.
\newblock \emph{{Information Sciences}}, 584:\penalty0 433--449, 2022.
\newblock \doi{10.1016/j.ins.2021.10.075.}

\bibitem[Xu et~al.(2022)Xu, Pang, Wang, and Wang]{xu2022}
H.~Xu, G.~Pang, Y.~Wang, and Y.~Wang.
\newblock {Deep Isolation Forest for Anomaly Detection}.
\newblock \emph{{arXiv}}, 2022.
\newblock \doi{10.48550/arXiv.2206.06602}.

\bibitem[Zimek et~al.(2014)Zimek, Campello, and Sander]{zimek2014}
A.~Zimek, R.~J. Campello, and J.~Sander.
\newblock {Ensembles for Unsupervised Outlier Detection: Challenges and
  Research Questions a Position Paper}.
\newblock \emph{SIGKDD Explor. Newsl.}, 15\penalty0 (1):\penalty0 11–22, mar
  2014.
\newblock ISSN 1931-0145.
\newblock \doi{10.1145/2594473.2594476}.

\end{thebibliography}
\bibliographystyle{myplainnat} 

\section{iForest and hidden subspaces}\label{sec:app_proof}

We prove that, if the anomalies are hidden in multidimensional subspaces of size $r$, the probability that an isolation tree finds such subspace decays as $\mathcal{O}(d^{-r})$, where $d$ is the number of features. Therefore, iForest is not a suitable algorithm for detecting outliers hidden in multidimensional subspaces.

Let $\bm{X}_n$ be a data set of size $n$ and dimensionality $d$ containing an outlier $X_o$ in a unique feature subspace of dimensionality $r$. Furthermore, assume that $X_o$ cannot be distinguished from an inlier in any lower feature subspace of size smaller than $r$. For simplicity, also assume that $X_o$ is a strong outlier in the hidden subspace, such that when this subspace is found, $X_o$ is easily detected as an outlier. The following proposition gives a recursion formula to compute the probability of finding such subspace.

\begin{prop}\label{prop:hidden_prob}
Let $d,r,h_{M} \in \mathbb{N}^+$, with $r\leq d$, be the number of features, size of the hidden subspace, and maximum depth of an isolation tree, respectively. Denote by $p(r,h_{M})$ the probability that a subspace of size $r$ is found by an isolation tree of length $h_{M}$. Then, $p(r,h_{M})$ admits the following recursion formula:
\begin{equation}\label{eq:hidden_subr}
    p(r,h_{M}) = \frac{r}{d}\sum_{i=1}^{h_{M}} \left(1-\frac{r}{d}\right)^{i-1}p(r-1,h_{M}-i),
\end{equation}
\begin{equation}\label{eq:hidden_sub1}
    p(1,h_{M}) = 1-\left(1-\frac{1}{d}\right)^{h_{M}}.
\end{equation}
\end{prop}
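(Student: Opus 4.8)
The plan is to recast $p(r,h_{M})$ as a purely combinatorial quantity. Along the unique root-to-leaf path that contains the outlier $X_o$, an isolation tree performs at most $h_{M}$ axis-parallel splits, and at each split it draws a splitting feature uniformly at random from the $d$ available features, independently of the previous splits. Because $X_o$ is assumed indistinguishable from an inlier in every subspace of size strictly smaller than $r$, the hidden subspace is \emph{found} precisely when all $r$ of its defining features have each been drawn at least once among the first $h_{M}$ splits. Hence $p(r,h_{M})$ equals the probability that $r$ prescribed ``target'' features all appear within $h_{M}$ independent uniform draws from a pool of $d$ feature types---a coupon-collector event.

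With this interpretation, the base case is immediate by a complementary-probability argument: each of the $h_{M}$ independent draws fails to pick the single relevant feature with probability $1-\frac{1}{d}$, so the probability of never picking it is $\left(1-\frac{1}{d}\right)^{h_{M}}$, and its complement gives $p(1,h_{M}) = 1-\left(1-\frac{1}{d}\right)^{h_{M}}$ as in \Cref{eq:hidden_sub1}.

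For the recursion I would condition on the index $i$ of the first split at which \emph{any} of the $r$ relevant features is drawn. By independence of the draws, the event that the first $i-1$ draws avoid all $r$ relevant features while the $i$-th draw hits one of them has probability $\left(1-\frac{r}{d}\right)^{i-1}\frac{r}{d}$. Given this event, exactly one relevant feature is collected, $r-1$ remain to be found, and $h_{M}-i$ draws are left; since the subsequent draws are i.i.d. uniform over the same $d$ features and independent of the past, the conditional probability of collecting the remaining $r-1$ targets within those $h_{M}-i$ draws is exactly $p(r-1,h_{M}-i)$. Summing over the admissible values $i=1,\dots,h_{M}$ then reproduces \Cref{eq:hidden_subr}.

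The step that needs the most care is this reduction to $p(r-1,\cdot)$. After the first relevant feature is drawn, the pool is not shrunk: subsequent draws are still uniform over all $d$ features, so re-drawing the already-found feature is ``wasted'' in exactly the same way as drawing a genuinely irrelevant one. Consequently the residual problem has per-draw success probability $\frac{r-1}{d}$ and coincides with the definition of $p(r-1,h_{M}-i)$. Making the memoryless first-success decomposition rigorous---i.e. verifying that conditioning on the first relevant draw leaves the future draws i.i.d. and independent of the history---is the main obstacle; once it is in place, the remainder is routine bookkeeping over $i$.
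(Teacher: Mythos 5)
Your proof is correct and takes essentially the same route as the paper's: the base case by complementing the probability of never drawing the single relevant feature in $h_{M}$ draws, and the recursion by conditioning on the first step $i$ at which any of the $r$ relevant features is drawn, which occurs with probability $\frac{r}{d}\left(1-\frac{r}{d}\right)^{i-1}$ and reduces the problem to $p(r-1,h_{M}-i)$. Your explicit coupon-collector framing and the remark that the feature pool is not shrunk after a success (so the residual problem is exactly $p(r-1,\cdot)$ over the same $d$ features) merely spell out what the paper's terser argument leaves implicit.
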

\begin{proof}
\Cref{eq:hidden_sub1} is simply the complementary of the probability of not selecting a particular feature in $h_{M}$ steps. \\
\Cref{eq:hidden_subr} follows from the fact that, if it takes $i$ steps to randomly select one of the features belonging to the hidden subspace, the problem can be reduced to finding the remaining $r-1$ features in $h_{M}-i$ steps. \\
The total probability is thus the summation over all these combinations, where $\frac{r}{d} (1-\frac{r}{d})^{i-1}$ is the probability that it takes $i$ steps to select one of the relevant features.
\end{proof}

From the results of \Cref{prop:hidden_prob}, it is now easy to prove that $p(r,h_{M})$ decays as $\mathcal{O}(d^{-r})$ for large $d$. In particular, a simple Taylor expansion shows that $p(1,M) \approx h_{M}/d$ for large $d$. Plugging this result into \Cref{eq:hidden_subr} gives the aforementioned convergence rate of $p(r,h_{M})$ towards zero. 

\begin{remark}
While a large value of $h_{M}$ would help in reducing the convergence rate, this is not possible in data sets where the number of features is equal or higher than the number of observations, as the maximum depth of an isolation tree without pruning is $n-1$ \cite{liu2012}. Furthermore, iForest extracts most of the outlier information during the first splits, therefore a large $h_{M}$ provides little additional information to distinguish outliers from inliers.
\end{remark}

\end{document}